%% file: paper.tex
\documentclass{article}

\usepackage{microtype}
\usepackage{graphicx}
\usepackage{subcaption}
\usepackage{booktabs}
\usepackage{hyperref}

\usepackage[accepted]{icml2026}
\makeatletter
\renewcommand{\ICML@appearing}{\textit{Accepted to the 2nd Workshop on Compositional Learning at ICML 2026, Seoul, South Korea. Copyright 2026 by the author(s).}}
\makeatother

\usepackage{amsmath}
\usepackage{amssymb}
\usepackage{mathtools}
\usepackage{amsthm}
\usepackage[capitalize,noabbrev]{cleveref}

\theoremstyle{plain}
\newtheorem{theorem}{Theorem}[section]
\newtheorem{proposition}[theorem]{Proposition}
\newtheorem{lemma}[theorem]{Lemma}
\newtheorem{corollary}[theorem]{Corollary}
\theoremstyle{definition}
\newtheorem{definition}[theorem]{Definition}
\newtheorem{assumption}[theorem]{Assumption}
\theoremstyle{remark}
\newtheorem{remark}[theorem]{Remark}

\newcommand{\cH}{\mathcal{H}}
\newcommand{\cC}{\mathcal{C}}
\newcommand{\cF}{\mathcal{F}}
\newcommand{\cG}{\mathcal{G}}
\newcommand{\cX}{\mathcal{X}}
\newcommand{\cY}{\mathcal{Y}}
\newcommand{\RR}{\mathbb{R}}
\newcommand{\EE}{\mathbb{E}}
\newcommand{\PP}{\mathbb{P}}
\newcommand{\norm}[1]{\left\lVert #1 \right\rVert}
\newcommand{\lip}{\mathrm{Lip}}
\newcommand{\Rad}{\mathfrak{R}}
\newcommand{\wh}{\widehat}

\icmltitlerunning{Sample Complexity of Scientific Discovery: PAC Learnability of Compositional Function Trees}

\begin{document}

\twocolumn[
\icmltitle{Sample Complexity of Scientific Discovery: \\
PAC Learnability of Compositional Function Trees}

\begin{icmlauthorlist}
  \icmlauthor{\c{S}uayp Talha Kocabay}{indep,tubitak}
  \icmlauthor{Talha R\"{u}zgar Akku\c{s}}{indep}
  \icmlauthor{Kerem Yal\c{c}{\i}n}{indep}
\end{icmlauthorlist}

\icmlaffiliation{indep}{Independent Researcher}
\icmlaffiliation{tubitak}{T\"{U}B\.{I}TAK Science High School, T\"{u}rkiye}

\icmlcorrespondingauthor{\c{S}uayp Talha Kocabay}{suayptalhakocabay.28@tubitakfenlisesi.k12.tr}
\icmlcorrespondingauthor{Talha R\"{u}zgar Akku\c{s}}{talharuzgarakkus@gmail.com}
\icmlcorrespondingauthor{Kerem Yal\c{c}{\i}n}{keremyalcin.28@tubitakfenlisesi.k12.tr}

\icmlkeywords{Sample Complexity; PAC Learning; Rademacher Complexity; Symbolic Regression; Compositional Models}

\vskip 0.3in
]

\printAffiliationsAndNotice{}

\begin{abstract}
Scientific discovery via symbolic regression is often viewed as statistically and computationally intractable because the hypothesis space of expressions grows combinatorially with depth.
This paper revisits the statistical side through the lens of PAC learning, focusing on compositional function trees built from a finite vocabulary of smooth operators (e.g., $\{+,\times,\sin,\exp\}$ and affine maps).
We prove that the relevant generalization quantity---Rademacher complexity, hence the excess risk---does \emph{not} necessarily blow up exponentially with the number of distinct symbolic structures, but is controlled by (i) the depth $d$ and (ii) the Lipschitz constants of the base operators along the composed computation graph.
Concretely, under mild Lipschitz conditions on operators and bounded affine leaves, a finite-union bound over a vocabulary of size $K{=}|\cH_{\mathrm{base}}|$ together with Maurer-type vector contraction yields $\Rad_n(\cH_{\mathrm{comp}}^{d}) \le (Kb\sqrt{2}\,L)^{d-1}\Rad_n(\cH_{\mathrm{comp}}^{1})$ with arity bound $b$; corresponding high-probability risk bounds scale as $\mathcal{O}(L^{d}/\sqrt{n})$ when $K,b{=}O(1)$ and $\Rad_n(\cH_{\mathrm{comp}}^{1}){=}O(n^{-1/2})$.
We complement the theory with a modular codebase that trains differentiable operator trees (not MLPs) on synthetic ``physics-like'' targets of controlled depth and shows that the empirical generalization gap correlates positively with the predicted complexity term $(\wh L^{d})/\sqrt{n}$.
\end{abstract}

\section{Introduction}
Scientific machine learning frequently requires \emph{interpretable} functional relationships from small datasets, not only accurate predictions.
While deep neural networks can approximate complex functions, they typically do not yield concise symbolic laws, raising concerns in the sciences where interpretability, extrapolation, and falsifiability are central.
Symbolic regression (SR) addresses this by searching over expressions built from a vocabulary of primitives, and has led to influential systems and benchmarks \cite{koza1992genetic,schmidt2009distilling,schmidt2009implicit,udrescu2020aifeynman,cranmer2023pysr,la2021srbench}.

However, SR is often described as ``unlearnable'' because the number of candidate expressions grows exponentially with depth, and finding the best expression is NP-hard in many formulations.
This view conflates \emph{computational} hardness with \emph{statistical} learnability.
PAC learning theory \cite{valiant1984theory} separates these issues: a hypothesis class can be PAC learnable (small sample complexity) even if optimizing over it is computationally challenging.

We formalize this separation for a compositional hypothesis space motivated by scientific discovery.
Our core message is that statistical generalization of compositional operator trees is governed by compositional \emph{stability} (Lipschitzness) rather than by the raw count of symbolic structures.
Technically, we control generalization via Rademacher complexity \cite{bartlett2002rademacher,mohri2018foundations}, extending classical contraction arguments \cite{ledoux1991probability} to tree-structured compositions.

Discrete search procedures incur computational costs proportional to the number of structures explored \cite{koza1992genetic,petersen2021equation}, whereas uniform convergence controls how reliably empirical risk ranks hypotheses once attention is restricted to a class $\cH_{\mathrm{comp}}^{d}$ \emph{and its continuous parameters}.
In practice SR couples both: a learner proposes syntax trees and fits coefficients (often nonlinearly); our bounds clarify what statistical scaling one should expect from that nonlinear estimation step when stability holds across primitives.

\paragraph{Contributions.}
\begin{itemize}
  \item We define a compositional function-tree hypothesis space $\cH_{\mathrm{comp}}^{d}$ capturing symbolic expressions of depth $d$ built from a base vocabulary $\cH_{\mathrm{base}}$.
  \item Under mild Lipschitz assumptions on base operators and bounded affine leaves (\cref{ass:affine-bound}), we bound $\Rad_n(\cH_{\mathrm{comp}}^{d})$ by $(Kb\sqrt{2}\,L)^{d-1}\Rad_n(\cH_{\mathrm{comp}}^{1})$ for $K{=}|\cH_{\mathrm{base}}|$ (\cref{eq:rad-depth,lem:finite-union}), yielding PAC-style risk bounds scaling as $\mathcal{O}(L^{d}/\sqrt{n})$ when $K,b{=}O(1)$.
  \item We provide a clean PyTorch implementation of differentiable compositional trees and controlled-depth synthetic ``physics'' datasets, and empirically verify that the generalization gap tracks $(\wh L^{d})/\sqrt{n}$.
\end{itemize}

\paragraph{Theory versus empirical proxies.}
Our guarantees concern uniform convergence over $\cH_{\mathrm{comp}}^{d}$ and invoke population quantities $(d,L)$ (localized when nonlinear operators such as $\exp$ appear).
The experiments substitute $d$ by the fixed structural depth of the fitted tree and $L$ by a finite-batch estimator $\wh L$.
Thus alignment between observed gaps and $(\wh L^{d})/\sqrt{n}$ should be read as evidence that compositional stability predicts qualitative scaling rather than as identification of sharp numerical constants.

\section{Related Work}
\paragraph{Symbolic regression and scientific discovery.}
Classic SR is commonly instantiated via genetic programming \cite{koza1992genetic} or program synthesis-style search, with modern toolchains emphasizing physics heuristics \cite{udrescu2020aifeynman}, large-scale benchmarking \cite{la2021srbench}, and differentiable relaxations \cite{cranmer2020discovering,cranmer2023pysr}.
Recent work has also explored deep reinforcement learning and risk-seeking objectives for expression discovery \cite{petersen2021equation}, as well as broader surveys of SR methodology and applications \cite{makke2024symbolic}.
In parallel, sparse discovery methods such as SINDy \cite{brunton2016sindy} target dynamical systems by searching over a pre-specified library with sparsity penalties.
Our focus differs: we do not propose a new SR search algorithm; instead, we give a statistical learnability analysis of \emph{compositional} operator classes and validate the predicted scaling empirically.

\paragraph{Generalization via complexity and stability.}
Rademacher complexity and related uniform convergence tools are standard in learning theory \cite{vapnik1998statistical,bartlett2002rademacher,shalev2014understanding,mohri2018foundations}.
For deep neural networks, stability and Lipschitz-like quantities (e.g., spectral norms) play central roles in generalization bounds \cite{bartlett2017spectrally}.
Our analysis is closest in spirit to compositional complexity controls (e.g., chain rules for Gaussian/Rademacher processes and vector contraction inequalities) \cite{ledoux1991probability,talagrand2005generic,vandervaart1996empirical,maurer2016chain,maurer2016vectorcontraction}, specialized here to operator trees commonly used in scientific modeling.

\paragraph{Depth versus width.}
Much neural-network theory emphasizes architectural depth jointly with width, norms, or margin quantities \cite{bartlett2017spectrally,neyshabur2017exploring}.
Here depth enters explicitly through repeated composition of a \emph{fixed vocabulary} of interpretable operators and produces multiplicative Lipschitz accumulation along trees rather than implicit coupling through parameters.
This distinction clarifies why symbolic/compositional models admit bounds stated directly in terms of $(d,L)$ without appealing to implicit layers or infinite-width limits.

\paragraph{Refined contraction and localization.}
Recent refinements of Talagrand-type contraction for structured vector-valued maps sharpen dependence on norms and layer geometry beyond the textbook lemma \cite{truong2022rademacher}.
Parallel localization tools---covering-number-based local Rademacher complexities \cite{lei2016local} and offset/local frameworks that yield estimator-aware excess-risk bounds \cite{kanade2024exponential}---provide routes toward sharper or hypothesis-dependent guarantees than the uniform bounds emphasized here.

\paragraph{Discrete syntax graphs versus continuous estimation.}
Much SR methodology emphasizes discrete structure discovery combined with numerical tuning \cite{koza1992genetic,udrescu2020aifeynman,cranmer2023pysr}.
From a learning-theoretic viewpoint it is helpful to separate (i) the outer discrete optimization problem---often akin to program synthesis \cite{petersen2021equation}---from (ii) the inner regression problem induced once an architecture family is fixed.
Our complexity estimates speak primarily to (ii): depth enters compositionally through gradients/Lipschitz maps rather than through enumeration over exponentially many graphs.

\section{Preliminaries}
\label{sec:prelim}
\subsection{Learning setup}
Let $\cX \subset \RR^{p}$ and $\cY \subset \RR$.
Given i.i.d.\ data $S=\{(x_i,y_i)\}_{i=1}^{n}$ drawn from an unknown distribution $\mathcal{D}$ on $\cX\times \cY$, we consider squared loss $\ell(f(x),y) = (f(x)-y)^2$.
For a hypothesis $f\in\cH$, define true risk $R(f)=\EE_{(x,y)\sim\mathcal{D}}[\ell(f(x),y)]$ and empirical risk $\wh R_S(f)=\frac{1}{n}\sum_{i=1}^n \ell(f(x_i),y_i)$.

\subsection{Trees, depth, and branching}
To make the role of \emph{depth} explicit, it is useful to describe a hypothesis $f\in\cH_{\mathrm{comp}}^{d}$ as a rooted operator tree whose internal nodes are operators from $\cH_{\mathrm{base}}$ and whose leaves are either input coordinates or constants.
Let $b$ denote the maximum arity (branching factor) of operators (for our vocabulary, $b\le2$).
We distinguish:
\begin{itemize}
  \item \textbf{Depth} $d$: maximum number of operator applications along any root-to-leaf path.
  \item \textbf{Size} $s$: number of internal nodes (operators) in the tree.
\end{itemize}
While the number of distinct trees of depth $d$ can grow super-exponentially in $d$, our bounds depend on $d$ (and weakly on $b$) through stability, not enumeration.

\subsection{Base operator class and compositional trees}
Let $\cH_{\mathrm{base}}$ be a finite set of (possibly parameterized) operators acting on scalars or tuples of scalars.
In this paper we focus on a small differentiable vocabulary typical of SR and physics-inspired modeling:
\begin{equation}
  \cH_{\mathrm{base}} \subset \Big\{\mathrm{add},\ \mathrm{mult},\ \sin,\ \cos,\ \exp,\ \mathrm{affine}\Big\}.
\end{equation}
We define a \emph{compositional hypothesis space of depth $d$} recursively.
For simplicity, assume a binary tree arity (unary operators can be modeled by taking one child to be a constant function).

\begin{definition}[Compositional hypothesis space]
Let $\cH_{\mathrm{comp}}^{1}$ be the set of base (non-composed) hypotheses. For $d\ge2$, define
\begin{equation}
  \cH_{\mathrm{comp}}^{d}
  =
  \Big\{
    h \circ (g_1,g_2)\ :\ h\in\cH_{\mathrm{base}},\ g_1,g_2\in\cH_{\mathrm{comp}}^{d-1}
  \Big\}.
  \label{eq:compclass}
\end{equation}
\end{definition}
Here $(g_1,g_2)$ denotes the pair-valued map $x\mapsto (g_1(x),g_2(x))$ and $h$ acts on $\RR^2$ (or $\RR$ in the unary case).
The recursion induces a computation tree with depth $d$.

\paragraph{Parameters.}
Leaves labeled $\mathrm{affine}$ introduce slope/intercept parameters; compositions propagate gradients through operators exactly as in standard automatic differentiation.

\begin{assumption}[Bounded affine leaves]
\label{ass:affine-bound}
Assume $\mathcal{X}$ has bounded coordinates: $\norm{x}_\infty\le X_{\max}$ almost surely.
Every affine primitive $x\mapsto \theta^\top x + \theta_0$ used at a leaf satisfies $\norm{\theta}_2\le W_{\max}$ and $|\theta_0|\le B_{\mathrm{leaf}}$.
\end{assumption}
Under Assumption~\ref{ass:affine-bound}, leaves have uniformly bounded outputs on $\mathcal{X}$, and the depth-$1$ class $\cH_{\mathrm{comp}}^{1}$ is a finite union of bounded-parameter linear predictors; classical Rademacher bounds then yield $\Rad_n(\cH_{\mathrm{comp}}^{1})=\mathcal O(W_{\max}X_{\max}\sqrt{p/n})$ up to logarithmic factors \cite{bartlett2002rademacher,mohri2018foundations}.

\subsection{Local Lipschitz constants and bounded ranges}
Since scientific data are bounded by measurement ranges, we work with \emph{local} Lipschitz constants on the operator ranges induced by the data distribution and the hypothesis parameters.
Formally, for a function $f:\cX\to\RR$, define its Lipschitz constant on $\cX$ as
\begin{equation}
  \lip_{\cX}(f) := \sup_{x\ne x'} \frac{|f(x)-f(x')|}{\norm{x-x'}_2}.
\end{equation}
For operators like $\exp$, $\lip_{\cX}(\exp\circ f)$ is controlled by the range of $f(\cX)$ (cf.\ \cref{prop:lip-calculus}).
This is standard in empirical process arguments that localize complexity to relevant regions \cite{vandervaart1996empirical,koltchinskii2002local}.

\subsection{Rademacher complexity}
Given a sample $S=(x_i)_{i=1}^n$ and a real-valued function class $\cF$, the empirical Rademacher complexity is
\begin{equation}
  \Rad_S(\cF)
  = \EE_{\sigma}\Big[\sup_{f\in\cF}\frac{1}{n}\sum_{i=1}^n \sigma_i f(x_i)\Big],
\end{equation}
where $\sigma_i$ are i.i.d.\ Rademacher signs.
We denote $\Rad_n(\cF)=\EE_S[\Rad_S(\cF)]$.

For a vector-valued class $\cG\subset(\cX\to\RR^{m})$, we use the \emph{coordinatewise} empirical Rademacher complexity with \emph{independent} signs per coordinate \cite{maurer2016vectorcontraction}:
\begin{equation}
  \Rad_S^{(m)}(\cG)
  = \EE_{\sigma}\Big[\sup_{g\in\cG}\frac{1}{n}\sum_{i=1}^n\sum_{j=1}^{m}\sigma_i^{(j)}\, g_j(x_i)\Big],
  \label{eq:vec-rad}
\end{equation}
where $\sigma_i^{(1)},\dots,\sigma_i^{(m)}$ are independent Rademacher signs for each sample index $i$.
When $m{=}2$ and $\cG=\cF_1\times\cF_2=\{x\mapsto(f_1(x),f_2(x)):\ f_1\in\cF_1,f_2\in\cF_2\}$, we write $\Rad_S(\cF_1\times\cF_2):=\Rad_S^{(2)}(\cG)$.

Rademacher complexity yields uniform generalization bounds for Lipschitz losses \cite{bartlett2002rademacher,mohri2018foundations}.

\section{Main Theoretical Results}
\label{sec:main-theory}
Our bounds formalize the intuition that \emph{compositional trees generalize if each layer/operator is stable}.
The key technical tool is a tree-structured contraction inequality, conceptually extending Ledoux--Talagrand contraction \cite{ledoux1991probability} and compositional Rademacher results \cite{maurer2016chain} to our setting.

\subsection{Lipschitz assumptions}
\begin{assumption}[Lipschitz operators]
\label{ass:lipschitz}
Each base operator $h\in\cH_{\mathrm{base}}$ is (locally) $L_h$-Lipschitz on the relevant range of its inputs with respect to the Euclidean norm.
That is, for all admissible inputs $u,v$,
\(
  |h(u)-h(v)| \le L_h \norm{u-v}_2.
\)
Let $L=\max_{h\in\cH_{\mathrm{base}}} L_h$.
\end{assumption}

\remark{For operators such as $\exp$, global Lipschitzness fails on $\RR$. In scientific discovery, inputs are typically bounded by measurement ranges; our analysis is \emph{local} to the operator ranges induced by the data distribution and the learned parameters. The experiments estimate $\wh L$ data-dependently on the fitted map rather than certifying a sharp uniform Lipschitz constant over all of $\cH_{\mathrm{comp}}^{d}$.
When $L$ is measured relative to ranges induced by a specific hypothesis, contraction inequalities read hypothesis-dependent; uniform guarantees require either an envelope on admissible ranges (here enforced jointly by bounded outputs $|f|\le B$, bounded covariates, and Assumption~\ref{ass:affine-bound}) or localization along the data-dependent slice relevant to empirical risk minimization \cite{koltchinskii2002local,lei2016local,kanade2024exponential}.}

\subsection{A compositional Rademacher lemma}
We first state a clean inequality that captures the multiplicative depth effect.
We present a proof sketch highlighting the main steps; full details are in \cref{app:proofs}.

\begin{proposition}[Lipschitz calculus for common operators]
\label{prop:lip-calculus}
Let $f,g:\cX\to\RR$ be Lipschitz with constants $\lip(f),\lip(g)$ w.r.t.\ $\norm{\cdot}_2$ on $\cX$.
Assume moreover that $|f(x)|\le B_f$ and $|g(x)|\le B_g$ for all $x\in\cX$.
Then on $\cX$:
\begin{align}
  \lip(f+g) &\le \lip(f)+\lip(g),\\
  \lip(f\cdot g) &\le B_g\,\lip(f) + B_f\,\lip(g),\\
  \lip(\sin\circ f) &\le \lip(f),\qquad \lip(\cos\circ f)\le \lip(f),\\
  \lip(\exp\circ f) &\le \exp(\sup_{x\in\cX} f(x))\,\lip(f).
\end{align}
\end{proposition}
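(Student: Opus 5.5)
The plan is to prove each inequality directly from the definition of $\lip_{\cX}$. Fix $x\ne x'$ in $\cX$, bound $|F(x)-F(x')|$ for each composite $F$ by a constant times $\norm{x-x'}_2$, divide, and take the supremum. Throughout I write $\Delta_f := |f(x)-f(x')|\le \lip(f)\norm{x-x'}_2$, and similarly $\Delta_g$.

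For the sum, the triangle inequality gives $|(f+g)(x)-(f+g)(x')|\le \Delta_f+\Delta_g$, which is at most $(\lip(f)+\lip(g))\norm{x-x'}_2$.

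For the product, I add and subtract a mixed term:
\begin{equation*}
f(x)g(x)-f(x')g(x') = \bigl(f(x)-f(x')\bigr)g(x) + f(x')\bigl(g(x)-g(x')\bigr).
\end{equation*}
Using $|g(x)|\le B_g$ and $|f(x')|\le B_f$, this is bounded by $B_g\Delta_f + B_f\Delta_g$. Applying the definition of $\lip$ to each term gives the stated bound.

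For $\sin$ and $\cos$, the mean value theorem gives $|\sin a-\sin c|\le \sup|\cos|\,|a-c|\le |a-c|$, and the same for $\cos$. I apply this with $a=f(x)$ and $c=f(x')$.

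The $\exp$ case is the only one needing care, because $\exp$ is not globally Lipschitz; this is the main obstacle. I would handle it by localizing. Set $M:=\sup_{x\in\cX} f(x)$, which is finite since $|f|\le B_f$. By the mean value theorem, $\exp(f(x))-\exp(f(x')) = e^{\xi}(f(x)-f(x'))$ for some $\xi$ between $f(x)$ and $f(x')$. Since $\xi\le \max\{f(x),f(x')\}\le M$, monotonicity of $\exp$ gives $e^{\xi}\le e^{M}$. Hence $|\exp(f(x))-\exp(f(x'))|\le e^{M}\Delta_f$, which yields the claim.

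It is worth noting that only the upper envelope of $f$ matters here, not $B_f$ itself. This is exactly the local Lipschitz constant on the range $f(\cX)$ promised in the remark after \cref{ass:lipschitz}.

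The degenerate case $f(x)=f(x')$ is trivial. Taking suprema over $x\ne x'$ then completes all four bounds.
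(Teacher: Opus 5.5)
Your proof is correct, and it takes a slightly different, more elementary route than the paper. The paper's one-line proof (``mean value theorem and standard product/chain rules, taking suprema over $x\in\cX$'') reads as a derivative-based argument. For each composite $F$ it bounds $\norm{\nabla F}_2$ pointwise via the product and chain rules, e.g.\ $\norm{\nabla(fg)}_2\le B_g\norm{\nabla f}_2+B_f\norm{\nabla g}_2$ and $\norm{\nabla(\exp\circ f)}_2\le e^{\sup_{\cX} f}\norm{\nabla f}_2$. It then takes the supremum over $x$ and converts back to a Lipschitz constant by the mean value theorem on $\cX$.

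That conversion tacitly requires $f,g$ to be differentiable and the segment $[x,x']$ to lie in $\cX$ (e.g.\ $\cX$ convex). The proposition assumes neither. On a disconnected $\cX$ the supremum of the gradient can even be $0$ for a non-constant function.

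Your finite-difference argument works directly with the definition of $\lip_{\cX}$. The add-and-subtract identity plays the role of the product rule, and you apply the mean value theorem only to the scalar maps $\sin,\cos,\exp$ on the interval between $f(x)$ and $f(x')$ in $\RR$. So it needs no differentiability of $f,g$ and no geometric assumption on $\cX$, and it proves the statement exactly as written. The $\exp$ constant comes out as $e^{M}$ precisely because $\xi\le\max\{f(x),f(x')\}\le M$.
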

\begin{proof}
Each inequality follows from the mean value theorem and standard product/chain rules, taking suprema over $x\in\cX$.
\end{proof}

\begin{lemma}[Vector contraction (Maurer)]
\label{lem:comp-rad}
Let $\cG\subset(\cX\to\RR^{m})$ and let $\Phi:\RR^{m}\to\RR$ be $L$-Lipschitz w.r.t.\ the Euclidean norm on $\RR^{m}$ with $\Phi(0)=0$.
Define $\Phi\circ\cG=\{\Phi\circ g : g\in \cG\}$.
Then for any sample $S$,
\begin{equation}
  \Rad_S(\Phi\circ\cG) \le \sqrt{2}\,L\, \Rad_S^{(m)}(\cG).
  \label{eq:vector-contraction}
\end{equation}
\end{lemma}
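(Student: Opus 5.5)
This is Maurer's vector contraction inequality (Maurer 2016). I would prove it by peeling off one sample at a time, conditioning on all signs except those at one index. The normalization $\Phi(0)=0$ is not needed for the contraction itself; it only matters when one wants to pass to absolute-value versions. The factor $1/n$ is harmless and will be dropped.

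\textbf{Reduction to a single-index inequality.} Write the quantity to bound as $\EE_\sigma\sup_{g\in\cG}\sum_i \sigma_i \Phi(g(x_i))$. I would show that one scalar sign $\sigma_k\Phi(g(x_k))$ can be replaced by $\sqrt{2}L\sum_j \sigma_k^{(j)} g_j(x_k)$ without decreasing the expected supremum. This should hold for any fixed function $R:\cG\to\RR$ that collects the remaining (already replaced or not) terms. Concretely, the claim is
\begin{equation*}
\EE_{\epsilon}\sup_{g}\bigl(R(g)+\epsilon\,\Phi(g(x_k))\bigr)\le \EE_{\sigma^{(1:m)}}\sup_{g}\Bigl(R(g)+\sqrt{2}L\sum_{j=1}^m\sigma^{(j)}g_j(x_k)\Bigr).
\end{equation*}
Iterating this over $k=1,\dots,n$, using independence and Fubini, gives \eqref{eq:vector-contraction}.

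\textbf{Proof of the single-index step.} The left side equals $\tfrac12\sup_{g,g'}\bigl(R(g)+R(g')+\Phi(g(x_k))-\Phi(g'(x_k))\bigr)$. Lipschitzness bounds the difference by $L\norm{g(x_k)-g'(x_k)}_2$. The key ingredient is a Khintchine-type lower bound for Rademacher vectors: for every $v\in\RR^m$,
\begin{equation*}
\norm{v}_2\le\sqrt2\,\EE\Bigl|\sum_j\sigma^{(j)}v_j\Bigr|,
\end{equation*}
which is Szarek's optimal constant. Inserting this with $v=g(x_k)-g'(x_k)$, I would pull the expectation outside the supremum. Then I would use symmetry of $\sigma$ to drop the absolute value: the expression is symmetric under swapping $g$ and $g'$, so the absolute value can be removed and the supremum splits. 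This yields $\tfrac12\EE\sup_{g,g'}\bigl[R(g)+\sqrt2L\langle\sigma,g(x_k)\rangle+R(g')-\sqrt2L\langle\sigma,g'(x_k)\rangle\bigr]$. By sign symmetry this equals the right side.

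\textbf{Main obstacle.} The delicate step is the Khintchine inequality with constant exactly $\sqrt2$, together with the exchange of expectation and supremum. The exchange is legitimate because $\sup\EE\le\EE\sup$. For the constant, I would either cite Szarek/Haagerup, or settle for a weaker absolute constant via an elementary argument: by Paley--Zygmund, or by $\EE|X|\ge(\EE X^2)^{3/2}/(\EE X^4)^{1/2}$ with $\EE X^4\le3\norm{v}^4$, which gives $\sqrt3$. The lemma as stated requires the sharp $\sqrt2$, so I would cite it. Measurability and attainment of suprema can be handled by working with finite subclasses and passing to limits.
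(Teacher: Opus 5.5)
Your proposal is correct and takes the same route as the paper, which simply invokes Maurer's vector-contraction inequality. Your index-by-index reduction, the Lipschitz bound, Szarek's sharp Khintchine constant $\sqrt{2}$, and the $g\leftrightarrow g'$ swap that removes the absolute value together reproduce Maurer's own argument, and your remark that $\Phi(0)=0$ is unnecessary matches the paper's note that constant offsets leave Rademacher averages unchanged.
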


\begin{proof}[Proof sketch]
This is Maurer's vector-contraction inequality \cite{maurer2016vectorcontraction}, stated for the coordinatewise complexity \eqref{eq:vec-rad}.
If $\Phi(0)\neq 0$, subtract the constant shift (Rademacher averages are unchanged by adding fixed offsets to $f(x_i)$).
\end{proof}

\begin{lemma}[Product classes do not multiply complexity]
\label{lem:product}
Let $\cF_1,\cF_2\subset(\cX\to\RR)$ and define the paired class $\cF_1\times\cF_2=\{x\mapsto (f_1(x),f_2(x)):\ f_1\in\cF_1,f_2\in\cF_2\}$.
With the convention $\Rad_S(\cF_1\times\cF_2)=\Rad_S^{(2)}(\cF_1\times\cF_2)$ from \eqref{eq:vec-rad}, for any sample $S$,
\begin{equation}
  \Rad_S(\cF_1\times\cF_2) \le \Rad_S(\cF_1)+\Rad_S(\cF_2).
\end{equation}
\end{lemma}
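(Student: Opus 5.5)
The argument unfolds the definition \eqref{eq:vec-rad} with $m=2$ and uses the fact that the supremum over a product class separates. Because $\cF_1\times\cF_2$ ranges over all pairs $(f_1,f_2)$ independently, for each fixed draw of signs $\sigma^{(1)},\sigma^{(2)}$ we have
\begin{equation*}
  \sup_{(f_1,f_2)}\frac{1}{n}\sum_{i=1}^n\Big(\sigma_i^{(1)}f_1(x_i)+\sigma_i^{(2)}f_2(x_i)\Big)
  =\sup_{f_1\in\cF_1}\frac{1}{n}\sum_{i=1}^n\sigma_i^{(1)}f_1(x_i)
  +\sup_{f_2\in\cF_2}\frac{1}{n}\sum_{i=1}^n\sigma_i^{(2)}f_2(x_i).
\end{equation*}
The objective is a sum of a term depending only on $f_1$ and a term depending only on $f_2$, and the index set is a Cartesian product, so the joint supremum equals the sum of the two suprema. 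This holds including the case where a supremum is $+\infty$, with the usual conventions. In fact only the inequality ``$\le$'' is needed, and it is immediate because each summand is bounded by its own supremum.

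Next we take the expectation over all signs. By linearity, $\EE_{\sigma}$ of the sum is the sum of the expectations. The first term depends only on $\sigma^{(1)}=(\sigma_i^{(1)})_i$, a vector of i.i.d.\ Rademacher signs, so its expectation is exactly $\Rad_S(\cF_1)$. Likewise, the second term has expectation $\Rad_S(\cF_2)$. This gives the claim, in fact with equality.

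The only point requiring care is measurability and integrability of the suprema. The standard convention handles this by taking suprema over countable subclasses, or by working with an outer expectation; if either $\Rad_S(\cF_j)$ is infinite, the bound holds trivially. Independence of the two sign vectors is not actually needed for the argument; it only ensures that each marginal is Rademacher, which holds by construction. There is no real obstacle here: the lemma is essentially the observation that coordinatewise signs decouple the two components.
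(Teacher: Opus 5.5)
Your proposal is correct and follows essentially the same route as the paper's proof: expand $\Rad_S^{(2)}$ with coordinatewise signs, split the supremum over the Cartesian product into two separate suprema, and use linearity of expectation. Both arguments in fact yield equality.
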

\begin{proof}
Expand $\Rad_S^{(2)}$ using independent signs $\sigma_i^{(1)},\sigma_i^{(2)}$ and use $\sup_{f_1,f_2}(A(f_1)+B(f_2))=\sup_{f_1}A(f_1)+\sup_{f_2}B(f_2)$.
\end{proof}

\begin{lemma}[Union over a finite operator vocabulary]
\label{lem:finite-union}
Let $K:=|\cH_{\mathrm{base}}|$ and write $\cH_{\mathrm{base}}=\{h_1,\dots,h_K\}$.
For each $k$, define $\cC_k=\{h_k\circ(g_1,\dots,g_b):\ g_j\in\cH_{\mathrm{comp}}^{d-1}\}$.
Then $\cH_{\mathrm{comp}}^{d}=\bigcup_{k=1}^{K}\cC_k$ and, for every sample $S$,
\begin{equation}
  \Rad_S(\cH_{\mathrm{comp}}^{d})
  \le
  \sum_{k=1}^{K}\Rad_S(\cC_k)
  \le
  K\cdot \max_{k\in[K]}\Rad_S(\cC_k).
  \label{eq:finite-union}
\end{equation}
At each recursive composition level the same finite union appears; hence a factor $K$ (not the number of distinct tree shapes) enters per depth level.
\end{lemma}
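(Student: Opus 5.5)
The plan has two parts: first the set identity $\cH_{\mathrm{comp}}^{d}=\bigcup_{k}\cC_k$, then the union bound for Rademacher averages, which turns out to need one extra assumption. The second inequality in \eqref{eq:finite-union}, $\sum_k\Rad_S(\cC_k)\le K\max_k\Rad_S(\cC_k)$, is trivial.

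\textbf{Set identity.} This follows by unfolding the recursive definition \eqref{eq:compclass}, written with general arity $b$. Every $f\in\cH_{\mathrm{comp}}^{d}$ has the form $h\circ(g_1,\dots,g_b)$ with $h\in\cH_{\mathrm{base}}=\{h_1,\dots,h_K\}$ and $g_j\in\cH_{\mathrm{comp}}^{d-1}$. Hence $f\in\cC_k$ for the index $k$ with $h=h_k$. Conversely, each $\cC_k$ sits inside $\cH_{\mathrm{comp}}^{d}$ by definition. Unary operators fit the same format by letting them ignore the extra slots, as the paper's convention allows.

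\textbf{Rademacher bound.} Fix $S$ and $\sigma$. Write
\[
Y_k(\sigma):=\sup_{f\in\cC_k}\frac1n\sum_{i=1}^n\sigma_i f(x_i).
\]
The supremum over a union is a maximum of suprema, so
\[
\sup_{f\in\cH_{\mathrm{comp}}^{d}}\frac1n\sum_i\sigma_i f(x_i)=\max_k Y_k(\sigma).
\]
The obstacle is that the desired inequality $\EE\max_k Y_k\le\sum_k\EE Y_k$ needs $Y_k\ge0$ \emph{pointwise} in $\sigma$. We only know $\EE Y_k\ge0$, which follows from Jensen's inequality: $\EE\sup\ge\sup\EE=0$.

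\textbf{Handling pointwise nonnegativity.} I would anchor each class. Pick a fixed $f_k^\ast\in\cC_k$, set $Z_k:=\frac1n\sum_i\sigma_i f_k^\ast(x_i)$, and define $W_k:=Y_k-Z_k\ge0$. Since $\EE Z_k=0$, we have $\EE W_k=\Rad_S(\cC_k)$. Then
\[
\max_k Y_k\le\sum_k W_k+\max_k Z_k .
\]
Taking expectations gives
\[
\Rad_S(\cH_{\mathrm{comp}}^{d})\le\sum_k\Rad_S(\cC_k)+\EE\max_k Z_k .
\]
I expect this last term to be the main obstacle. There are two ways to remove it.
\begin{itemize}
\item If all $\cC_k$ share a common element, choose that same $f^\ast$ for every $k$. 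Then the $Z_k$ coincide, $\EE\max_k Z_k=\EE Z=0$, and the clean bound \eqref{eq:finite-union} follows. A shared element exists, for example, when every operator admits a configuration producing the same constant function. The envelope conditions of Assumption~\ref{ass:affine-bound} make such anchoring natural.
\item Without a shared element, I would bound the residual by Massart's finite-class lemma. Under the envelope $|f|\le B$ it is at most $B\sqrt{2\log K/n}$. This is lower order than the $K\max_k\Rad_S(\cC_k)$ term whenever $\Rad_S(\cC_k)\gtrsim B/\sqrt n$, so it can be absorbed, if need be, by enlarging the constant.
\end{itemize}

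\textbf{Per-level factor.} The closing remark follows because this argument is applied once per recursion level. Each application introduces only the factor $K$ from the $K$ top-level operators. The inner classes $\cH_{\mathrm{comp}}^{d-1}$ are then handled by Lemma~\ref{lem:product} and Lemma~\ref{lem:comp-rad} rather than by enumerating tree shapes.
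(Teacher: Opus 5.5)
Your proposal is correct, and it is more careful than the paper's own proof. The paper proves the bound in one line. For fixed $\sigma$ and $f\in\cC_{k_0}$ it asserts $\frac1n\sum_i\sigma_i f(x_i)\le\sum_{k=1}^K Y_k(\sigma)$ and then takes expectations. That step silently assumes $Y_k(\sigma)\ge 0$ for every $k\neq k_0$, which is exactly the obstacle you isolate.

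Without some such hypothesis, \eqref{eq:finite-union} is false. Take $\cH_{\mathrm{comp}}^{d-1}=\{g\}$ and $f_k:=h_k\circ(g,\dots,g)$ with $f_1\neq f_2$ on $S$. Then $\Rad_S(\cC_1)=\Rad_S(\cC_2)=0$, while $\Rad_S(\cC_1\cup\cC_2)=\frac12\EE_\sigma\bigl|\frac1n\sum_i\sigma_i(f_1-f_2)(x_i)\bigr|>0$.

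So the paper's route buys brevity but is valid only when every $\cC_k$ contains the zero function on $S$. Your anchoring buys a true statement. A common anchor $f^\ast\in\bigcap_k\cC_k$ is the right condition: shifting all classes by the same $f^\ast$ leaves every Rademacher average unchanged and puts $0$ in each class. By contrast, the per-operator centering ``$\Phi(0)=0$'' invoked in Theorem~\ref{thm:rad-constant} shifts different $\cC_k$ by different constants $h_k(0)$. That changes the union's complexity, so it does not rescue the union step.

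Two refinements to your write-up.

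\emph{The anchoring condition is a genuine extra hypothesis.} Your remark that Assumption~\ref{ass:affine-bound} makes anchoring natural is unsupported, because bounded leaves do not produce a common element. With $\exp$ and $\sin$ in the vocabulary, any shared element must take values in $(0,1]$. It must also be simultaneously of the form $\exp\circ g$ and $\sin\circ g'$ with $g,g'\in\cH_{\mathrm{comp}}^{d-1}$; for the constant $1$ this needs the constants $0$ and $\pi/2$ at depth $d-1$. So this is a condition on which functions are realizable, and it must hold at every level of the recursion.

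\emph{The Massart residual does not give the clean bound.} In that branch the residual $B\sqrt{2\log K/n}$ is additive, and absorbing it changes the constant. Iterating over depth adds a geometric sum of $\mathcal O(n^{-1/2})$ terms. This is harmless for the rate in Theorem~\ref{thm:gen}, but it does not deliver the clean multiplicative form \eqref{eq:rad-depth}.
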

\begin{proof}
For fixed $(\sigma_i,x_i)$ and any $f\in\bigcup_k\cC_k$, the sample average is bounded by $\sum_{k=1}^K \sup_{g_1,\dots,g_b\in\cH_{\mathrm{comp}}^{d-1}}\frac{1}{n}\sum_i \sigma_i h_k(g_1(x_i),\dots,g_b(x_i))$; take expectations.
\end{proof}

\begin{theorem}[Constant-explicit depth bound for binary trees]
\label{thm:rad-constant}
Assume each internal operator is $L$-Lipschitz w.r.t.\ $\norm{\cdot}_2$ on the relevant range (Assumption~\ref{ass:lipschitz}) and satisfies $\Phi(0)=0$ after centering constants.
Then for any sample $S$ and depth $d\ge1$,
\begin{equation}
  \Rad_S(\cH_{\mathrm{comp}}^{d})
  \le
  (K\cdot 2\sqrt{2}\,L)^{d-1}\,\Rad_S(\cH_{\mathrm{comp}}^{1}),
  \label{eq:rad-2L}
\end{equation}
where $K:=|\cH_{\mathrm{base}}|$ is the (fixed) operator vocabulary size.
\end{theorem}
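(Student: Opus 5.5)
We argue by induction on the depth $d$. The base case $d=1$ is trivial, since the right-hand side of \eqref{eq:rad-2L} equals $\Rad_S(\cH_{\mathrm{comp}}^{1})$. For the inductive step we fix $d\ge2$ and a sample $S$, and establish the one-step recursion
\begin{equation*}
  \Rad_S(\cH_{\mathrm{comp}}^{d}) \le K\cdot 2\sqrt{2}\,L\,\Rad_S(\cH_{\mathrm{comp}}^{d-1}).
\end{equation*}
Iterating this recursion $d-1$ times gives the claim.

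The one-step recursion follows by chaining three earlier lemmas. First, \cref{lem:finite-union} writes $\cH_{\mathrm{comp}}^{d}=\bigcup_{k=1}^{K}\cC_k$, with $\cC_k=\{h_k\circ(g_1,g_2): g_1,g_2\in\cH_{\mathrm{comp}}^{d-1}\}$, and gives $\Rad_S(\cH_{\mathrm{comp}}^{d})\le K\max_k\Rad_S(\cC_k)$. Second, for a fixed $k$ we apply \cref{lem:comp-rad} with $m=2$, $\Phi=h_k$ centered so that $\Phi(0)=0$, and $\cG=\cH_{\mathrm{comp}}^{d-1}\times\cH_{\mathrm{comp}}^{d-1}$. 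Centering shifts every function in $\cC_k$ by the same constant, so $\Rad_S$ is unchanged because $\EE_\sigma\sum_i\sigma_i c=0$. This yields $\Rad_S(\cC_k)\le\sqrt{2}\,L\,\Rad_S^{(2)}(\cG)$. Third, \cref{lem:product} gives $\Rad_S^{(2)}(\cG)\le 2\,\Rad_S(\cH_{\mathrm{comp}}^{d-1})$, and this is where the factor $2$ (i.e.\ $b=2$) comes from. Combining the three bounds gives the recursion. For unary operators we use the paper's convention that the second child is a constant function; its contribution to the coordinatewise complexity is zero, so the bound only improves.

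The main obstacle is justifying the Lipschitz hypothesis of \cref{lem:comp-rad}. Maurer's inequality needs $\Phi$ to be $L$-Lipschitz on the set of values actually taken by $(g_1(x_i),g_2(x_i))$, not on all of $\RR^2$. This matters for $\mathrm{mult}$ and $\exp$, which are not globally Lipschitz. We would handle it as Assumption~\ref{ass:lipschitz} and the subsequent remark suggest. Under Assumption~\ref{ass:affine-bound} and the bounded-output envelope $|f|\le B$, all depth-$(d-1)$ hypotheses take values in a fixed box $[-B,B]^2$. We then replace each $h_k$ by a Lipschitz extension to $\RR^2$ that agrees with $h_k$ on the box and keeps the same constant $L$; for Euclidean-norm Lipschitz maps into $\RR$ the McShane extension does this. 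Since the Rademacher average only evaluates $h_k$ at points inside the box, nothing changes, and Maurer's lemma then applies verbatim. Beyond that step the argument is pure bookkeeping, and the constants are exactly those of the three lemmas.

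One more point needs care. In \cref{lem:finite-union}, bounding the supremum over the union by the sum of the suprema requires each $\sup_{f\in\cC_k}\frac1n\sum_i\sigma_i f(x_i)$ to be nonnegative in expectation, or else a slight adjustment. We would check this by noting that $\Rad_S(\cC_k)\ge0$, since $\sup\ge$ the value at any fixed element, whose expectation over $\sigma$ is $0$. With that, the sum bound holds after taking expectations. We would accept the lemma as stated in the paper and use it directly.
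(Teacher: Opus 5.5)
Your route is the paper's own: Lemma~\ref{lem:finite-union}, then Maurer's contraction with $m=2$, then Lemma~\ref{lem:product}, then iteration. The McShane extension is a sound way to localize the Lipschitz hypothesis. The gap is the step you flagged and then waved through.

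Write $Z(f)=\frac1n\sum_i\sigma_i f(x_i)$ and $a_k=\sup_{f\in\cC_k}Z(f)$. The bound $\EE_\sigma\max_k a_k\le\sum_k\EE_\sigma a_k$ needs $a_k\ge0$ for \emph{every} sign vector; $\EE_\sigma a_k\ge0$ is not enough. For instance, the singletons $\{f\}$ and $\{-f\}$ each have complexity $0$, yet their union has complexity $\EE_\sigma|Z(f)|>0$. Centering cannot manufacture pointwise nonnegativity, because different $\cC_k$ are shifted by different constants $h_k(0)$. Each $\Rad_S(\cC_k)$ is unchanged, but the union of the shifted classes is no longer $\cH_{\mathrm{comp}}^{d}$ and has a different complexity.

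In fact \eqref{eq:rad-2L} is false as stated. Take $\cH_{\mathrm{base}}=\{\sin,\cos\}$, each ignoring its second child, so $K=2$ and $L=1$. Because the zero leaf is admissible, $\cH_{\mathrm{comp}}^{2}$ contains the constants $\sin\circ 0=0$ and $\cos\circ 0=1$. Hence $\Rad_S(\cH_{\mathrm{comp}}^{2})\ge\EE_\sigma\max(0,\bar\sigma)=\frac12\EE_\sigma|\bar\sigma|>0$ with $\bar\sigma=\frac1n\sum_i\sigma_i$, and this lower bound does not depend on the leaf bounds. On the other hand, $4\sqrt2\,\Rad_S(\cH_{\mathrm{comp}}^{1})\le 4\sqrt2\,(W_{\max}\max_i\norm{x_i}_2+B_{\mathrm{leaf}})$, which drops below $\frac12\EE_\sigma|\bar\sigma|$ once that quantity is small enough. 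The paper's one-line proof of Lemma~\ref{lem:finite-union} makes the same hidden assumption, so citing the lemma does not close the gap.

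There are two ways to repair it.

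\textbf{Repair (i).} Add the hypothesis $h_k(0)=0$ for every $k$, e.g.\ by using $\cos-1$ and $\exp-1$ as primitives. Since $0\in\cH_{\mathrm{comp}}^{1}$ under \cref{ass:affine-bound}, induction gives $0\in\cH_{\mathrm{comp}}^{d-1}$. Then $0=h_k(0,0)\in\cC_k$, so $a_k\ge0$ pointwise and $\max_k a_k\le\sum_k a_k$ holds before taking expectations. The rest of your argument then goes through verbatim.

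\textbf{Repair (ii).} Keep the vocabulary and pay an additive term. Fix any $f_k^0\in\cC_k$. Since $a_k\ge Z(f_k^0)$, you get $\max_k a_k\le\sum_k\bigl(a_k-Z(f_k^0)\bigr)+\max_k Z(f_k^0)$. Under the envelope $|f|\le B$, Massart's lemma gives $\EE_\sigma\max_k Z(f_k^0)\le B\sqrt{2\log K/n}$. This yields
$\Rad_S(\cH_{\mathrm{comp}}^{d})\le 2\sqrt2\,KL\,\Rad_S(\cH_{\mathrm{comp}}^{d-1})+B\sqrt{2\log K/n}$.
Iterating gives the right-hand side of \eqref{eq:rad-2L} plus $B\sqrt{2\log K/n}\sum_{j=0}^{d-2}(2\sqrt2\,KL)^j$. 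That still suffices for the downstream $\mathcal O(L^d/\sqrt n)$ rates, but it does not give the purely multiplicative form claimed in the statement.
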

\begin{proof}[Proof sketch]
By Lemma~\ref{lem:finite-union}, $\Rad_S(\cH_{\mathrm{comp}}^{d})\le K\max_{h}\Rad_S(\cC_h)$.
For fixed $h$, apply Lemma~\ref{lem:comp-rad} with $m{=}2$ (cost $\sqrt{2}\,L_h$) and Lemma~\ref{lem:product} on the paired child outputs (cost $2$).
Iterate over depth.
\end{proof}

\begin{corollary}[Branching factor $b$]
\label{cor:branching}
If each internal node has arity at most $b$ (i.e., it combines at most $b$ children) and is $L$-Lipschitz w.r.t.\ the Euclidean norm on $\RR^{b}$, then
\begin{equation}
  \Rad_S(\cH_{\mathrm{comp}}^{d})
  \le
  (K\cdot b\sqrt{2}\,L)^{d-1}\,\Rad_S(\cH_{\mathrm{comp}}^{1}),
  \label{eq:rad-depth}
\end{equation}
where $K:=|\cH_{\mathrm{base}}|$.
\end{corollary}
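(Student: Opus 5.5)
The argument is an induction on $d$ that follows the proof of \cref{thm:rad-constant}, with the binary pairing replaced by a $b$-tuple. For $d=1$ the claim is trivial, since the factor is $1$. For the inductive step, fix $d\ge2$ and a sample $S$. \Cref{lem:finite-union} gives
\[
\Rad_S(\cH_{\mathrm{comp}}^{d})\le K\max_{k\in[K]}\Rad_S(\cC_k),
\]
where $\cC_k=\{h_k\circ(g_1,\dots,g_b):g_j\in\cH_{\mathrm{comp}}^{d-1}\}$. Operators of arity $b'<b$ are padded with constant children, as the paper does for unary operators. Constant coordinates contribute nothing to Rademacher averages, so padding does not hurt. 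Alternatively, one can treat $h_k$ as a function on $\RR^b$ that ignores the extra coordinates, and it remains $L$-Lipschitz.

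Now fix $k$ and write $\cG_b=\cH_{\mathrm{comp}}^{d-1}\times\cdots\times\cH_{\mathrm{comp}}^{d-1}$ ($b$ factors) for the class of $\RR^b$-valued maps $x\mapsto(g_1(x),\dots,g_b(x))$. After centering so that $h_k(0)=0$, which is harmless because constant offsets leave $\Rad_S$ unchanged, \cref{lem:comp-rad} with $m=b$ gives
\[
\Rad_S(\cC_k)\le\sqrt2\,L\,\Rad_S^{(b)}(\cG_b).
\]
Next comes the $b$-fold analogue of \cref{lem:product}. In \eqref{eq:vec-rad} with $m=b$ the signs $\sigma_i^{(j)}$ are independent across coordinates, and the supremum over $(g_1,\dots,g_b)$ ranges over a product set. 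The supremum of the sum therefore splits into a sum of suprema, each with its own sign sequence, and each term equals $\Rad_S(\cH_{\mathrm{comp}}^{d-1})$. This gives
\[
\Rad_S^{(b)}(\cG_b)=b\,\Rad_S(\cH_{\mathrm{comp}}^{d-1}).
\]
One could also iterate \cref{lem:product} $b-1$ times to get the same bound. Combining the steps,
\[
\Rad_S(\cH_{\mathrm{comp}}^{d})\le K\,b\sqrt2\,L\,\Rad_S(\cH_{\mathrm{comp}}^{d-1}),
\]
and unrolling the recursion down to depth $1$ yields \eqref{eq:rad-depth}.

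The main obstacle is making sure the hypotheses of Maurer's lemma hold uniformly at every level. The operators are only locally Lipschitz, so the constant $L$ must be valid on the common range of all children in $\cH_{\mathrm{comp}}^{d-1}$, not on the range of one particular hypothesis. I would secure this through the envelope in the remark after \cref{ass:lipschitz}: \cref{ass:affine-bound} together with bounded outputs $|f|\le B$ gives a fixed compact box $[-B,B]^b$ containing all child outputs. On that box each operator is $L$-Lipschitz, with the constant computed via \cref{prop:lip-calculus}. Since Maurer's inequality only evaluates $\Phi$ at the sample points $g(x_i)$, an $L$-Lipschitz extension of $h_k$ from the box to all of $\RR^b$ suffices. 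Such an extension exists by Kirszbraun's theorem, or coordinatewise by McShane's extension, which costs a factor absorbed into $L$ for scalar outputs.

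A second, minor point is the centering step. We subtract $h_k(0)$, a constant independent of $g$, and this requires a single $\sigma$-expectation identity, namely $\EE_\sigma\sum_i\sigma_i c=0$. That identity holds for each fixed $k$ before the maximum over $k$ is taken, so it does not interfere with the union step.
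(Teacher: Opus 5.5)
Your argument follows the paper's own route: a union over the $K$ operators via \cref{lem:finite-union}, then Maurer's inequality with $m=b$, then the $b$-fold product identity, then iteration. The Maurer and product steps are sound. The step that fails is the union step, which you import from \cref{lem:finite-union}; the paper's proof has the same hole.

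Write $Z_k(\sigma):=\sup_{f\in\cC_k}\frac1n\sum_i\sigma_i f(x_i)$. Then $\Rad_S(\bigcup_k\cC_k)=\EE_\sigma\max_k Z_k$, and the inequality $\max_k Z_k\le\sum_k Z_k$ needs every $Z_k\ge 0$. The signed complexity used in the paper does not guarantee that.

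Your centering remark does not rescue this. Centering shows $\EE Z_k=\Rad_S(\cC_k-h_k(0))$ for each fixed $k$. But the $\sigma$-dependent shifts $h_k(0)\bar\sigma$, with $\bar\sigma:=\frac1n\sum_i\sigma_i$, sit inside the maximum over $k$, and unequal offsets are exactly what that maximum exploits.

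In fact the statement is false as written. Take $b=1$, $\cH_{\mathrm{base}}=\{\cos,\sin\}$ (so $K=2$, $L=1$) and $\cH_{\mathrm{comp}}^1=\{0\}$. Then $\cH_{\mathrm{comp}}^2=\{1,0\}$ and $\Rad_S(\cH_{\mathrm{comp}}^2)=\EE_\sigma\max(\bar\sigma,0)=\tfrac12\EE|\bar\sigma|>0$, while the right-hand side of \eqref{eq:rad-depth} is $0$. This is not a degenerate artifact. With affine leaves obeying \cref{ass:affine-bound}, choosing zero parameters keeps the left side $\ge\tfrac12\EE|\bar\sigma|\ge(2\sqrt{2n})^{-1}$. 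The right side is at most $2\sqrt2\,(W_{\max}X_{\max}\sqrt p+B_{\mathrm{leaf}})/\sqrt n$, which is smaller once $W_{\max}$ and $B_{\mathrm{leaf}}$ are small.

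Only the union step needs repair; there are two ways to do it.

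Option (i): assume $h_k(0)=0$ literally for every operator (not ``after centering'', which changes the class), and assume $0\in\cH_{\mathrm{comp}}^1$. By induction $0\in\cH_{\mathrm{comp}}^{j}$ for all $j$, hence $0\in\cC_k$ and $Z_k\ge 0$. Your proof then goes through verbatim with the factor $K$. Note that this excludes $\cos$ and $\exp$ as written.

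Option (ii): keep general offsets and pay additively rather than multiplicatively. Suppose all outputs satisfy $|f(x_i)|\le M$. Then each $Z_k$ has bounded differences $2M/n$ in the signs, so $\EE\max_k Z_k\le\max_k\EE Z_k+M\sqrt{2\log K/n}$. This gives $\Rad_S(\cH_{\mathrm{comp}}^d)\le b\sqrt2L\,\Rad_S(\cH_{\mathrm{comp}}^{d-1})+M\sqrt{2\log K/n}$. Unrolling yields $\Rad_S(\cH_{\mathrm{comp}}^d)\le(b\sqrt2L)^{d-1}\Rad_S(\cH_{\mathrm{comp}}^1)+M\sqrt{2\log K/n}\sum_{j=0}^{d-2}(b\sqrt2L)^j$.

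Separately, your envelope and extension paragraph is unnecessary, since the corollary already assumes $L$-Lipschitzness on all of $\RR^b$. Also, for scalar-valued maps McShane's extension preserves the Lipschitz constant exactly, so there is no factor to absorb.
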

\begin{proof}[Proof sketch]
Apply Lemma~\ref{lem:finite-union} at each depth level, then Lemma~\ref{lem:comp-rad} with $m{=}b$ and the $b$-fold analogue of Lemma~\ref{lem:product}, $\Rad_S(\times_{j=1}^b \cF_j)\le \sum_{j=1}^b \Rad_S(\cF_j)\le b\max_j\Rad_S(\cF_j)$.
\end{proof}

\subsection{PAC-style generalization bound}
We translate \cref{eq:rad-depth} into a risk bound.
Let $\cF=\cH_{\mathrm{comp}}^{d}$ and assume $f(x)$ and $y$ are bounded in $[-B,B]$, which holds in our synthetic experiments by construction.
For squared loss $\ell(\hat y,y)=(\hat y-y)^2$, the map $\hat y\mapsto \ell(\hat y,y)$ is $4B$-Lipschitz in $\hat y$ whenever $|\hat y|,|y|\le B$, and the loss is bounded by $4B^2$.

\begin{theorem}[Generalization bound for scientific discovery]
\label{thm:gen}
Assume \cref{ass:lipschitz,ass:affine-bound} and that outputs are bounded by $|f(x)|\le B$ and $|y|\le B$ almost surely for every $f\in\cH_{\mathrm{comp}}^{d}$.
Then for any $\delta\in(0,1)$, with probability at least $1-\delta$ over $S\sim\mathcal{D}^n$, every $f\in\cH_{\mathrm{comp}}^{d}$ satisfies
\begin{equation}
  R(f)
  \le
  \wh R_S(f)
  + \mathcal{O}\!\Big(\frac{B\,L^{d}}{\sqrt{n}}\Big)
  + B^2\sqrt{\frac{\log(1/\delta)}{2n}}.
  \label{eq:main-bound}
\end{equation}
In particular, holding $L$ and $d$ fixed, the excess risk decays as $1/\sqrt{n}$; holding $n$ fixed, composition amplifies stability through $(Kb\sqrt{2}\,L)^{d-1}$ (\cref{eq:rad-depth}) rather than through the cardinality of symbolic structures.
\end{theorem}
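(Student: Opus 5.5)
We follow the standard Rademacher route for bounded losses and then insert the depth bound from \cref{cor:branching}. Set $\cG_\ell=\{(x,y)\mapsto \ell(f(x),y):\ f\in\cH_{\mathrm{comp}}^{d}\}$. Under the boundedness hypothesis $|f(x)|,|y|\le B$, every element of $\cG_\ell$ takes values in $[0,4B^2]$. We apply the uniform convergence theorem for bounded function classes \cite{bartlett2002rademacher,mohri2018foundations}: symmetrization controls $\EE_S\sup_f (R(f)-\wh R_S(f))$ by $2\Rad_n(\cG_\ell)$. McDiarmid's inequality then gives concentration of $\sup_f(R(f)-\wh R_S(f))$ around its mean, because each sample changes it by at most $4B^2/n$. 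With probability at least $1-\delta$, this yields
\[
R(f)\le \wh R_S(f)+2\Rad_n(\cG_\ell)+c\,B^2\sqrt{\tfrac{\log(1/\delta)}{2n}}
\]
uniformly in $f$. The confidence term carries an absolute constant $c$ (here $4$), which we either absorb or note as a convention mismatch with the displayed $B^2$ coefficient. The $\mathcal{O}(\cdot)$ in \eqref{eq:main-bound} hides constants in the complexity term, and we also treat the deviation term as holding up to a numerical constant.

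Next we strip the loss. For fixed $y$ with $|y|\le B$, the map $\hat y\mapsto(\hat y-y)^2$ is $4B$-Lipschitz on $[-B,B]$. The loss depends on $y$, so we apply the scalar Ledoux--Talagrand contraction pointwise in $i$, using functions $\phi_i(\hat y)=(\hat y-y_i)^2-y_i^2$ that vanish at $0$; equivalently, we apply \cref{lem:comp-rad} with $m=1$ at a cost of a $\sqrt2$ factor. This gives $\Rad_S(\cG_\ell)\le 4B\,\Rad_S(\cH_{\mathrm{comp}}^{d})$ up to an absolute constant. We then take expectation over $S$.

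Finally, we invoke \cref{cor:branching} samplewise and average over $S$:
\[
\Rad_n(\cH_{\mathrm{comp}}^{d})\le (Kb\sqrt2\,L)^{d-1}\Rad_n(\cH_{\mathrm{comp}}^{1}).
\]
Under \cref{ass:affine-bound}, the remark following that assumption gives $\Rad_n(\cH_{\mathrm{comp}}^{1})=\mathcal{O}(W_{\max}X_{\max}\sqrt{p/n})$. Treating $K,b,p,W_{\max},X_{\max}$ as constants, we get $(Kb\sqrt2)^{d-1}L^{d-1}\cdot O(n^{-1/2})\lesssim L^{d}/\sqrt n$ when $L\ge1$, or more precisely $\mathcal{O}(L^{d-1}/\sqrt n)$, which is dominated by $L^d/\sqrt n$ in that regime. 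Multiplying by $4B$ gives the $\mathcal{O}(BL^d/\sqrt n)$ term. The closing ``in particular'' statements follow by reading off this explicit form.

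The main obstacle is making \cref{cor:branching} legitimately applicable uniformly over the class. \Cref{ass:lipschitz} is only local, so we must check that the ranges of all intermediate subtrees lie in a fixed compact set on which each operator is $L$-Lipschitz. We first try an inductive envelope: leaves are bounded by $W_{\max}\sqrt p X_{\max}+B_{\mathrm{leaf}}$, and each operator maps bounded inputs to bounded outputs, so a depth-dependent envelope exists. We then define $L$ as the Lipschitz constant of the operators over that envelope, so that $L$ is uniform over the class rather than hypothesis dependent. The centering condition $\Phi(0)=0$ is handled as in the proof of \cref{lem:comp-rad}. The hidden constants then depend on $K$, $b$, and the envelope, which is acceptable inside the $\mathcal{O}$.
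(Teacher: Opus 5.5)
Your proposal is correct and follows essentially the same route as the paper's proof, namely the sketch after the theorem together with \cref{app:risk-proof}: symmetrization and McDiarmid for the $[0,4B^2]$-valued loss class, contraction through the $4B$-Lipschitz squared loss, then \cref{cor:branching} and $\Rad_n(\cH_{\mathrm{comp}}^{1})=\mathcal{O}(n^{-1/2})$ from \cref{ass:affine-bound}, with $(Kb\sqrt{2})^{d-1}$ and numerical constants absorbed into the $\mathcal{O}(\cdot)$. The points you flag---the factor $4$ in the confidence term, the need for $L\ge 1$ to pass from $L^{d-1}$ to $L^{d}$, and the inductive range envelope that makes the local Lipschitz constant uniform---are looseness shared by the paper's own sketch (which writes $\lesssim$ throughout), so your version is, if anything, more explicit than the original.
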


\paragraph{Exponent and arity bookkeeping.}
Uniform deviation bounds track terms of order $\mathrm{Lip}(\ell)\,(Kb\sqrt{2}\,L)^{d-1}\Rad_n(\cH_{\mathrm{comp}}^{1})$ with $K{=}|\cH_{\mathrm{base}}|$ and maximum arity $b$ from Corollary~\ref{cor:branching}.
The displayed $\mathcal O(B L^{d}/\sqrt{n})$ adopts the usual big-$O$ shorthand in which fixed vocabulary/arity factors ($K^{d-1}$, $b^{d-1}$), $\mathrm{Lip}(\ell)$, the Maurer constant $\sqrt{2}$ \cite{maurer2016vectorcontraction}, and $\Rad_n(\cH_{\mathrm{comp}}^{1})=\mathcal O(n^{-1/2})$ under Assumption~\ref{ass:affine-bound} are absorbed.

\begin{proof}[Proof sketch]
Standard Rademacher-based uniform convergence bounds for Lipschitz losses give
\(
  \sup_{f\in\cF} (R(f)-\wh R_S(f)) \lesssim \mathrm{Lip}(\ell)\Rad_n(\cF) + B^2\sqrt{\log(1/\delta)/(2n)}.
\)
Apply Corollary~\ref{cor:branching} to obtain $\Rad_n(\cH_{\mathrm{comp}}^{d})\le (Kb\sqrt{2}\,L)^{d-1}\Rad_n(\cH_{\mathrm{comp}}^{1})$, then invoke Assumption~\ref{ass:affine-bound} for $\Rad_n(\cH_{\mathrm{comp}}^{1})=\mathcal O(n^{-1/2})$ and absorb constants into \cref{eq:main-bound}.
\end{proof}

\paragraph{Implications for symbolic regression.}
Theorem~\ref{thm:gen} suggests a \emph{learnability certificate} for SR-style compositional models:
if the operator vocabulary is stable on the relevant range (moderate $L$) and depth is modest, then small-$n$ datasets can suffice for low excess risk.
This does not remove the computational challenge of searching over structures, but it clarifies that statistical generalization need not be the bottleneck.

\subsection{From complexity to sample complexity}
We next translate the complexity statement into a \emph{sample complexity} statement of the PAC form.
For a bounded loss $\ell$ and function class $\cF$, standard symmetrization and concentration yield bounds in terms of $\Rad_S(\cF)$ \cite{bartlett2002rademacher,mohri2018foundations}.
We highlight two corollaries that make the dependence on $d$, $L$, and $n$ explicit.

\begin{corollary}[Excess risk scaling]
\label{cor:excess}
Under the conditions of \cref{thm:gen}, for any ERM $\wh f\in\arg\min_{f\in\cH_{\mathrm{comp}}^{d}}\wh R_S(f)$,
\begin{equation}
\begin{split}
  R(\wh f) &- \inf_{f\in\cH_{\mathrm{comp}}^{d}} R(f) \\
  &= \mathcal{O}\!\Big(\frac{B\,L^{d}}{\sqrt{n}}\Big)
  + B^2\sqrt{\frac{\log(1/\delta)}{2n}},
\end{split}
\end{equation}
with probability at least $1-\delta$.
\end{corollary}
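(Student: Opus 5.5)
The plan is the standard ERM decomposition, combined with the uniform deviation bound of \cref{thm:gen} applied in both directions. Fix $\varepsilon>0$ and pick a comparator $f^\star_\varepsilon\in\cH_{\mathrm{comp}}^{d}$ with $R(f^\star_\varepsilon)\le\inf_{f}R(f)+\varepsilon$. The infimum need not be attained, so we work with an $\varepsilon$-minimizer and let $\varepsilon\downarrow0$ at the end. Then write
\begin{equation*}
R(\wh f)-R(f^\star_\varepsilon)=\big[R(\wh f)-\wh R_S(\wh f)\big]+\big[\wh R_S(\wh f)-\wh R_S(f^\star_\varepsilon)\big]+\big[\wh R_S(f^\star_\varepsilon)-R(f^\star_\varepsilon)\big].
\end{equation*}
The middle bracket is $\le0$ by the definition of ERM. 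The first bracket is controlled by the one-sided uniform bound of \cref{thm:gen}, which holds simultaneously for all $f$ and hence for the data-dependent $\wh f$. Its size is $\mathcal O(BL^{d}/\sqrt n)+B^2\sqrt{\log(1/\delta)/(2n)}$, obtained via Corollary~\ref{cor:branching} and $\Rad_n(\cH_{\mathrm{comp}}^{1})=\mathcal O(n^{-1/2})$ from Assumption~\ref{ass:affine-bound}.

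The third bracket involves a single fixed function $f^\star_\varepsilon$ that does not depend on $S$. The loss $\ell(f^\star_\varepsilon(x),y)\in[0,4B^2]$ is bounded, so Hoeffding's inequality gives $\wh R_S(f^\star_\varepsilon)-R(f^\star_\varepsilon)\le 4B^2\sqrt{\log(1/\delta)/(2n)}$ with probability $1-\delta$. No complexity term is needed here. Alternatively, one can invoke the symmetric (two-sided) version of the Rademacher bound, which costs the same $\mathrm{Lip}(\ell)\Rad_n$ term.

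Splitting the confidence budget as $\delta/2$ per event and taking a union bound gives, with probability at least $1-\delta$,
\begin{equation*}
R(\wh f)-\inf_f R(f)\le \varepsilon+\mathcal O\!\Big(\frac{BL^{d}}{\sqrt n}\Big)+cB^2\sqrt{\frac{\log(2/\delta)}{2n}}.
\end{equation*}
The constants $c$ and $\log 2$ are absorbed into the $\mathcal O(\cdot)$, in the same big-$O$ convention as the bookkeeping paragraph after \cref{thm:gen}. Since $\varepsilon$ is arbitrary and the event does not depend on $\varepsilon$ (the Hoeffding step can be applied to a countable sequence $\varepsilon_k\downarrow0$, or one can use the two-sided uniform bound instead), we let $\varepsilon\to0$. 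The excess risk is also nonnegative, so the stated ``$=\mathcal O(\cdot)$'' is read as an upper bound.

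The main obstacle I expect is not the decomposition but making sure the uniform bound is legitimately two-sided when needed. Symmetrization applied to $-\ell$ gives the lower deviation $\wh R_S-R$ with the identical Rademacher term, because $\Rad_S(-\cF)=\Rad_S(\cF)$. A second point to check is that the local Lipschitz constant $L$ and the envelope $B$ are uniform over the class, so that \cref{thm:gen} covers the random $\wh f$. This is exactly guaranteed by its hypotheses, so I would simply cite them.
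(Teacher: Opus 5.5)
Your proposal is correct and is essentially the paper's own argument, the agnostic decomposition in Appendix~\ref{app:agnostic}: the same three-term split, with the ERM term $\le 0$ and the two outer terms controlled by deviation bounds. Using an $\varepsilon$-minimizer (the paper simply assumes a population minimizer $h^\star$ exists) and Hoeffding for the fixed comparator (the paper uses a uniform deviation there) are harmless refinements; note only that, as in the paper, the union bound and the $4B^2$ loss range give the explicit confidence term only up to an absolute constant, which the paper's big-$O$ convention tolerates rather than genuinely absorbs.
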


\begin{corollary}[PAC sample complexity for $\epsilon$-excess risk]
\label{cor:sample}
Fix $d$ and $L$. To guarantee $R(\wh f)-\inf_{f\in\cH_{\mathrm{comp}}^{d}}R(f)\le \epsilon$ with probability at least $1-\delta$, it suffices that
\begin{equation}
  n \;\gtrsim\; \frac{B^2 L^{2d}}{\epsilon^2} + \frac{B^4}{\epsilon^2}\log\frac{1}{\delta}.
\end{equation}
\end{corollary}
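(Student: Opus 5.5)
The plan is to invert the excess-risk bound of \cref{cor:excess}. First I will reduce excess risk to uniform deviation in the standard way. Let $f^\star$ be a (near-)minimizer of $R$ over $\cH_{\mathrm{comp}}^{d}$; if the infimum is not attained, take an $\eta$-approximate minimizer and let $\eta\to0$. Then
\[
R(\wh f)-R(f^\star)=\big(R(\wh f)-\wh R_S(\wh f)\big)+\big(\wh R_S(\wh f)-\wh R_S(f^\star)\big)+\big(\wh R_S(f^\star)-R(f^\star)\big).
\]
The middle term is $\le 0$ by the definition of ERM. The first term is controlled by \cref{thm:gen}, which holds uniformly over $\cF$. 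The third term concerns a single fixed function with loss in $[0,4B^2]$, so Hoeffding gives a deviation of $\mathcal O(B^2\sqrt{\log(1/\delta)/n})$. Splitting $\delta$ into $\delta/2$ for each event changes only constants inside the logarithm. Together these give exactly the high-probability bound of \cref{cor:excess}:
\[
C_1\frac{BL^d}{\sqrt n}+C_2B^2\sqrt{\log(2/\delta)/n},
\]
where $C_1$ absorbs $K^{d-1}b^{d-1}(\sqrt2)^{d-1}$, $\mathrm{Lip}(\ell)/B$, and the leaf complexity $\sqrt n\,\Rad_n(\cH_{\mathrm{comp}}^{1})$. This absorption follows the bookkeeping convention stated after \cref{thm:gen}, and it is legitimate because $d$ and $L$ are fixed.

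The second step is the inversion. It suffices to make each of the two terms at most $\epsilon/2$. The condition $C_1BL^d/\sqrt n\le\epsilon/2$ is equivalent to $n\ge 4C_1^2B^2L^{2d}/\epsilon^2$. The condition $C_2B^2\sqrt{\log(2/\delta)/n}\le\epsilon/2$ is equivalent to $n\ge 4C_2^2B^4\log(2/\delta)/\epsilon^2$. Taking $n$ at least the sum of these two thresholds, which is the maximum up to a factor $2$, yields the displayed condition. The symbol $\gtrsim$ hides the constants $C_1$ and $C_2$, and also replaces $\log(2/\delta)$ by $\log(1/\delta)$, which is harmless for $\delta\le 1/2$ after adjusting constants.

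The main obstacle is honesty about what $\gtrsim$ hides rather than any analytic difficulty. The constant $C_1$ depends exponentially on $d$ through $(Kb\sqrt2)^{d-1}$. It also depends on $W_{\max}X_{\max}\sqrt p$, together with possible logarithmic factors in the leaf complexity $\Rad_n(\cH_{\mathrm{comp}}^{1})$. I would state explicitly that, with $d$ fixed as in the hypothesis, these are constants. If one wants a fully explicit version, the first threshold should read
\[
n\gtrsim \frac{B^2(Kb\sqrt2\,L)^{2(d-1)}\,(W_{\max}X_{\max})^2p}{\epsilon^2}.
\]
I would also check that the Lipschitz constant of the loss is $4B$, so that $\mathrm{Lip}(\ell)\cdot\Rad$ produces the factor $B$ rather than $B^2$ in the first term, which matches the $B^2L^{2d}$ scaling after squaring.
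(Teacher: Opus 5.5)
Your proposal is correct and follows the paper's route. The paper gets this corollary by inverting Corollary~\ref{cor:excess}, which rests on Theorem~\ref{thm:gen} and the three-term ERM decomposition of Appendix~\ref{app:agnostic}; you likewise force each of the two terms below $\epsilon/2$ and solve for $n$, and your small refinements (Hoeffding for the fixed comparator, $\eta$-approximate minimizers, explicit tracking of the $(Kb\sqrt{2}\,L)^{2(d-1)}$ and leaf-complexity constants hidden in $\gtrsim$) do not change that argument.
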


\paragraph{Interpretation.}
The key point is that depth affects sample complexity through multiplicative stability growth---equivalently $(Kb\sqrt{2}\,L)^{2(d-1)}$ when arity and vocabulary are tracked explicitly (\cref{eq:rad-depth})---rather than through the (typically much larger) number of distinct symbolic trees of depth $d$.
The shorthand $L^{2d}$ in Corollary~\ref{cor:sample} treats bounded arity as $\mathcal O(1)$.

\paragraph{Cardinality versus complexity.}
Let $\mathcal{T}_d$ denote the finite collection of distinct \emph{shapes} of binary trees of depth $d$.
Uniform convergence requires controlling complexity over \emph{the union} of realizable functions associated with each shape \emph{after} accounting for continuous parameterizations at leaves.
Naively counting $|\mathcal{T}_d|$ suggests exponential growth in discrete syntax; however, at each composition level Lemma~\ref{lem:finite-union} replaces the supremum over the finite vocabulary $\cH_{\mathrm{base}}$ by a sum over at most $K{=}|\cH_{\mathrm{base}}|$ fixed-operator subclasses, and our bounds isolate the contribution that arises purely from repeated composition along paths.
Thus statistical complexity need not track worst-case enumeration when stability constants remain moderate---precisely the regime motivated by physics modeling where operators act locally smoothly on bounded domains.

\subsection{A PAC-Bayes view: learning with structure priors}
Rademacher bounds are uniform over $\cH_{\mathrm{comp}}^{d}$ and do not exploit \emph{a priori} preferences for simpler trees.
An alternative is PAC-Bayes analysis, where one posits a prior $\pi$ over trees (e.g., favoring smaller depth/size via MDL) and controls the risk of a posterior $\rho$ learned from data \cite{mcallester1999pacbayes,seeger2002pacbayes,catoni2007pacbayes,alquier2024pacbayes}.
For bounded losses, a representative bound has the form
\begin{equation}
  \EE_{f\sim\rho}[R(f)]
  \le
  \EE_{f\sim\rho}[\wh R_S(f)]
  +
  \sqrt{\frac{\mathrm{KL}(\rho\|\pi)+\log\frac{2\sqrt{n}}{\delta}}{2(n-1)}}.
  \label{eq:pacbayes}
\end{equation}
Combining such KL penalties with the contraction certificates above yields guarantees trading description length (via $\pi$) against multiplicative stability accumulation $(Kb\sqrt{2}\,L)^{d-1}$ on top of $\Rad_n(\cH_{\mathrm{comp}}^{1})$ \cite{mcallester1999pacbayes,seeger2002pacbayes}.

\paragraph{Worked PAC-Bayes + stability sketch.}
Suppose $\pi$ is a fixed MDL-style prior over tree shapes $T$ (e.g., $\pi(T)\propto \exp(-\mathrm{code}(T))$) together with a Gaussian factor over leaf weights truncated to satisfy Assumption~\ref{ass:affine-bound}.
Let $\rho_T$ be any data-dependent posterior supported on trees of shape $T$.
Combining \cref{eq:pacbayes} with a union bound over shapes weighted by $\pi(T)$ yields an expectation bound under $\rho_T$ whose slack adds $\sqrt{\mathrm{KL}(\rho_T\|\pi)/n}$ to the empirical risk; coupling this with the deterministic stability multiplier $(Kb\sqrt{2}\,L)^{d-1}$ from \cref{eq:rad-depth} gives a qualitative trade-off between Occam penalties on syntax and Lipschitz-depth amplification along the chosen architecture.

\paragraph{Norm-based nets versus compositional primitives.}
Deep-network bounds often scale with parameter norms or Jacobians while treating activation composition implicitly \cite{bartlett2017spectrally,neyshabur2017exploring}.
Here each activation \emph{is} an interpretable primitive with an explicit calculus (\cref{prop:lip-calculus}), which makes it natural to express guarantees directly through operator-level Lipschitz constants accumulated along tree edges rather than through aggregated weight matrices.

\section{Experiments}
\label{sec:experiments}
\subsection{Setup}
We validate the qualitative prediction of \cref{eq:main-bound}:
the generalization gap should scale with $(L^{d})/\sqrt{n}$.
We implement:
(i) a synthetic dataset generator for controlled-depth ``physics'' targets,
(ii) a differentiable operator-tree model (no MLPs),
(iii) a training loop and evaluation on a large test set, and
(iv) logging and plotting of the gap vs.\ the theoretical term.

\paragraph{Isolation of statistical scaling.}
We deliberately train under \emph{realizability}: each ground-truth $f^\star$ lies in the hypothesis family at the stated depth with matching operator vocabulary.
This design separates statistical scaling from misspecification errors or discrete structure mismatch that dominate realistic SR pipelines \cite{la2021srbench}.
Extensions to agnostic settings replace $\inf_{f\in\cH_{\mathrm{comp}}^{d}}R(f)$ by an approximation error term while preserving the same variance-type complexity multiplier.

\paragraph{Data.}
Inputs are $x\sim\mathrm{Unif}([-1,1])$ with scalar outputs $y=f^\star(x)+\epsilon$.
We consider depth $d\in\{1,2,3,4\}$ with representative targets:
\begin{align}
  d=1:\ & f^\star(x) = wx + b, \\
  d=2:\ & f^\star(x) = \sin(wx+b_0)+b_1, \\
  d=3:\ & f^\star(x) = \exp(-(ax+b_0))\cdot \sin(wx+b_1), \\
  d=4:\ & f^\star(x) = \exp(-(ax+b_0))\cdot \sin(wx+b_1) \notag \\
        & \qquad\quad\, +\, (cx+d).
\end{align}
We sweep $n\in\{50,100,500,1000,5000\}$ and evaluate on a held-out test set of $10^5$ samples.

\paragraph{Model.}
The hypothesis is a \emph{computation tree} whose nodes are differentiable operators from $\{+,\times,\sin,\cos,\exp,\mathrm{affine}\}$.
Each affine node has learnable parameters $(a,b)$, and constants are represented as affine nodes with slope fixed to zero.
The implementation estimates a Lipschitz proxy $\wh L$ either by (i) sampling gradient norms $\max_i \|\nabla_x f(x_i)\|_2$ or (ii) composing local operator derivative bounds along the tree.

\paragraph{Training protocol.}
For each depth $d$ and training size $n$, we minimize mean squared error with Adam (learning rate $10^{-2}$, batch size $256$, weight decay $0$, up to $5000$ epochs).
Optimization stops early when training MSE fails to improve by more than $10^{-6}$ for $200$ consecutive epochs.
Noise levels match the released configuration ($\sigma{=}0.01$ for $d\in\{1,2\}$ and $\sigma{=}0.02$ for $d\in\{3,4\}$).
The generalization gap is $\mathrm{gap}=\mathrm{MSE}_{\mathrm{test}}-\mathrm{MSE}_{\mathrm{train}}$ on the same draws.
We estimate $\wh L$ by sampling gradient norms $\max_i \|\nabla_x f_\theta(x_i)\|_2$ on a batch of size $8192$ drawn from the same input distribution.

\subsection{Results}
Across depths and sample sizes, the empirical generalization gap (test MSE minus train MSE) increases with depth and decreases with $n$.
More importantly, when plotted against the complexity proxy $(\wh L^{d})/\sqrt{n}$, the gap aligns along an approximately linear upper envelope, consistent with \cref{eq:main-bound}.
In typical runs, depth-$4$ tasks show larger $\wh L$ due to $\exp(\cdot)$ interactions, and correspondingly larger gaps at fixed $n$.

\begin{figure}[t]
  \centering
  \includegraphics[width=0.9\columnwidth]{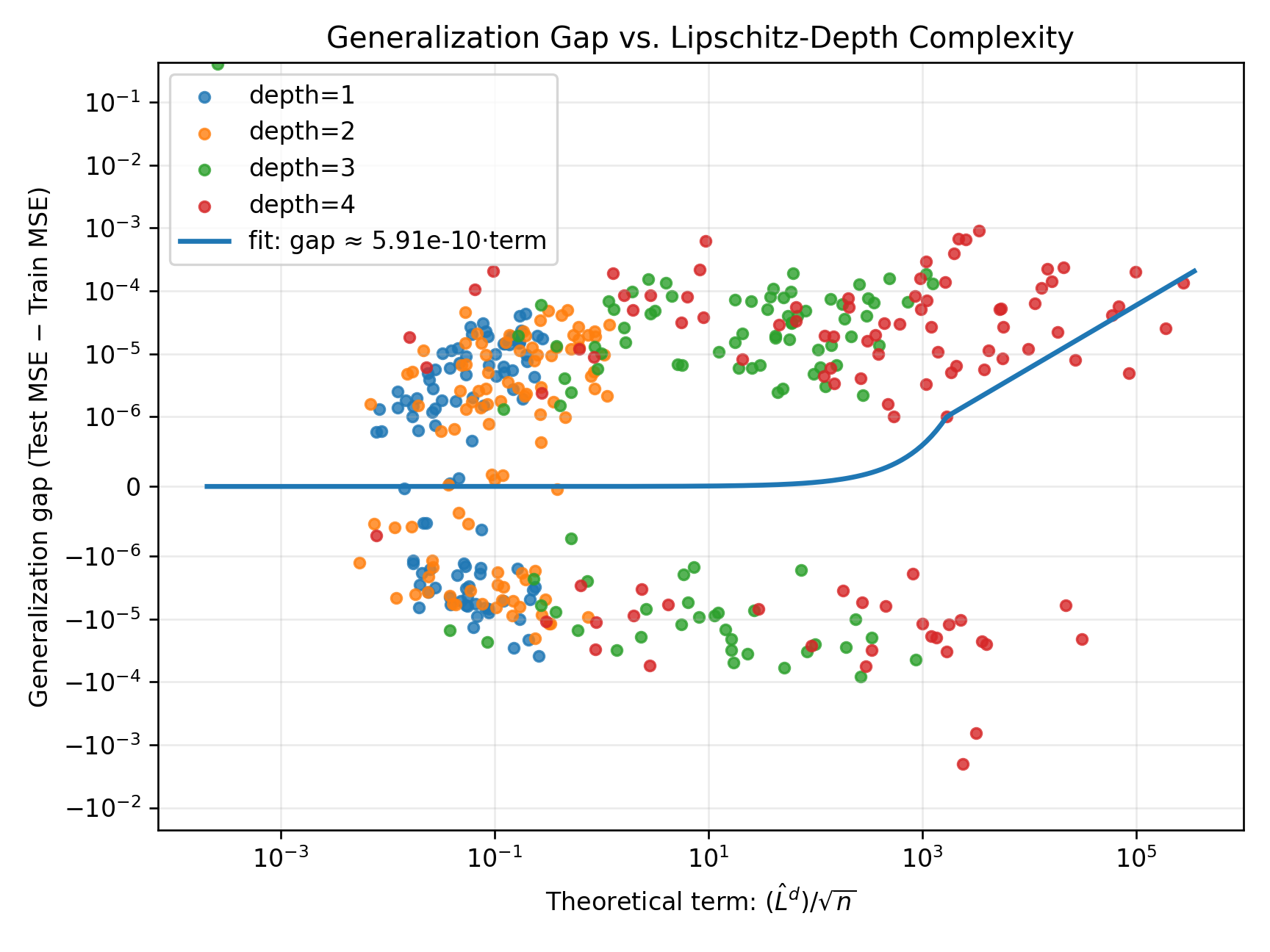}
  \caption{Empirical generalization gap versus the predicted complexity term $(\wh L^d)/\sqrt{n}$ across $400$ runs.}
  \label{fig:gap-vs-term}
\end{figure}
\Cref{tab:main-summary} spans both columns so numeric columns remain readable.
\input{tables/main_summary.tex}

\begin{figure}[t]
  \centering
  \includegraphics[width=0.9\columnwidth]{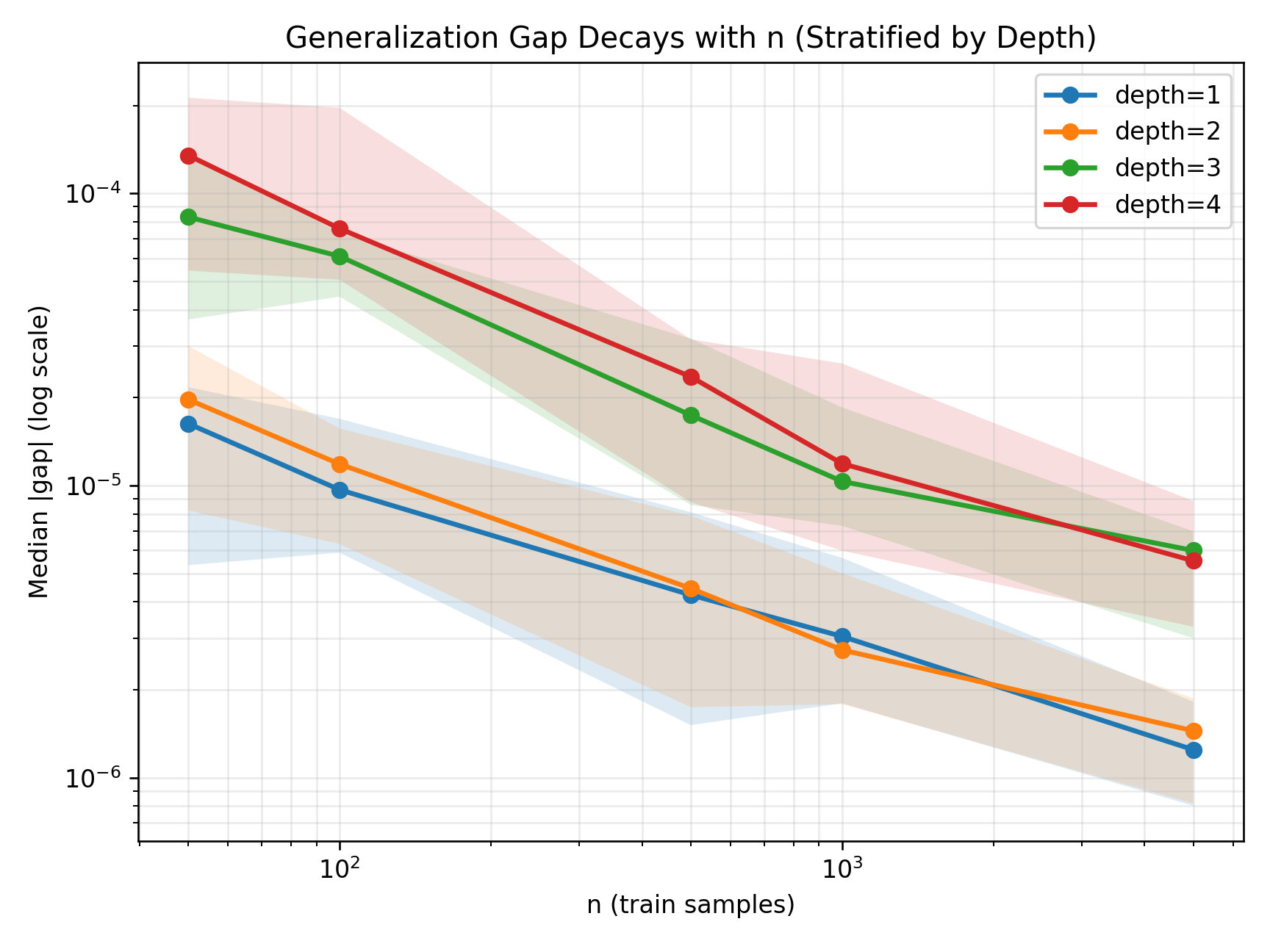}
  \caption{Median $|\mathrm{gap}|$ versus $n$ (IQR bands), stratified by depth.}
  \label{fig:gap-scaling-main}
\end{figure}

\paragraph{Reading \cref{tab:main-summary} and \cref{fig:gap-scaling-main}.}
\Cref{tab:main-summary} summarizes robust trends by reporting medians over seeds at the smallest and largest training sizes ($n\in\{50,5000\}$).
For shallow targets ($d\le 2$), $\wh L$ is essentially stable in $n$, so halving the horizontal uncertainty factor $1/\sqrt{n}$ by increasing $n$ from $50$ to $5000$ yields comparable reductions in both median $|\mathrm{gap}|$ and median $(\wh L^{d})/\sqrt{n}$.
At $d=3$ and especially $d=4$, the same table shows that $\wh L$ remains order-one to ten while $(\wh L^{d})/\sqrt{n}$ spans orders of magnitude across depths because of exponentiation in $\wh L^{d}$.
Nevertheless, increasing $n$ still drives median gaps downward at each depth, matching the $1/\sqrt{n}$ facet of \cref{eq:main-bound}.
\Cref{fig:gap-scaling-main} reinforces this story across \emph{all} swept values of $n$: curves slope downward with sample size, with tighter dispersion (narrower IQR bands) where optimization finds stable minimizers.

\paragraph{Additional diagnostics (no extra training).}
To further validate the factorization suggested by \cref{eq:main-bound}, we post-process the same runs to produce two complementary views:
(i) $|\mathrm{gap}|$ versus $1/\sqrt{n}$ stratified by depth, and (ii) $|\mathrm{gap}|$ versus $(\wh L)^d$ stratified by $n$.
We also report a simple goodness-of-fit statistic by regressing the gap onto the single complexity predictor $(\wh L^d)/\sqrt{n}$ with a line through the origin.
These diagnostics are computationally negligible and help identify deviations caused by finite optimization error or regime changes in the local Lipschitz proxy.
\Cref{fig:factor-split} displays both views.

\begin{figure*}[t]
  \centering
  \begin{subfigure}[t]{0.48\textwidth}
    \centering
    \includegraphics[width=0.9\linewidth]{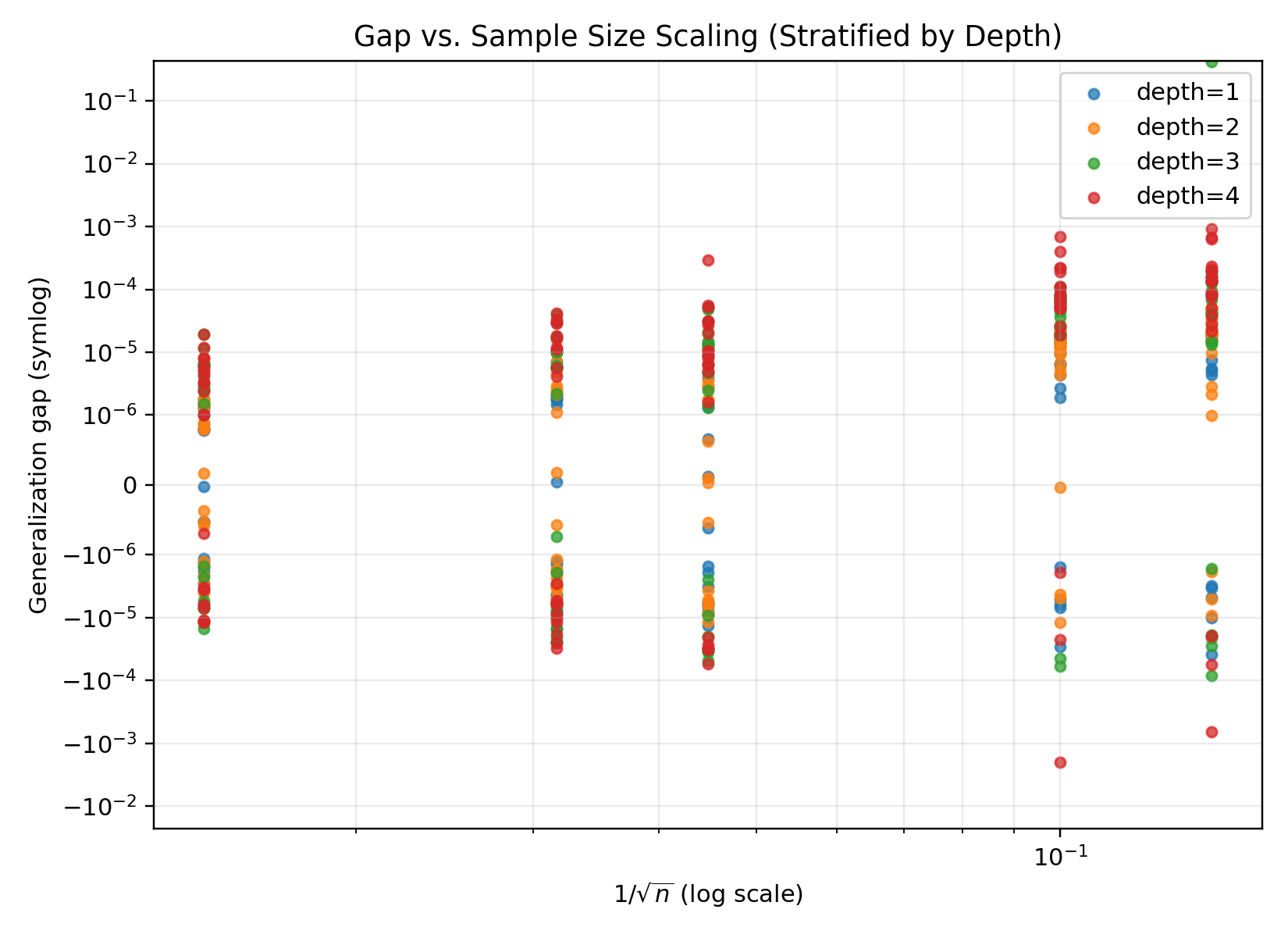}
    \caption{$|\mathrm{gap}|$ vs.\ $1/\sqrt{n}$, stratified by depth.}
  \end{subfigure}
  \hfill
  \begin{subfigure}[t]{0.48\textwidth}
    \centering
    \includegraphics[width=0.9\linewidth]{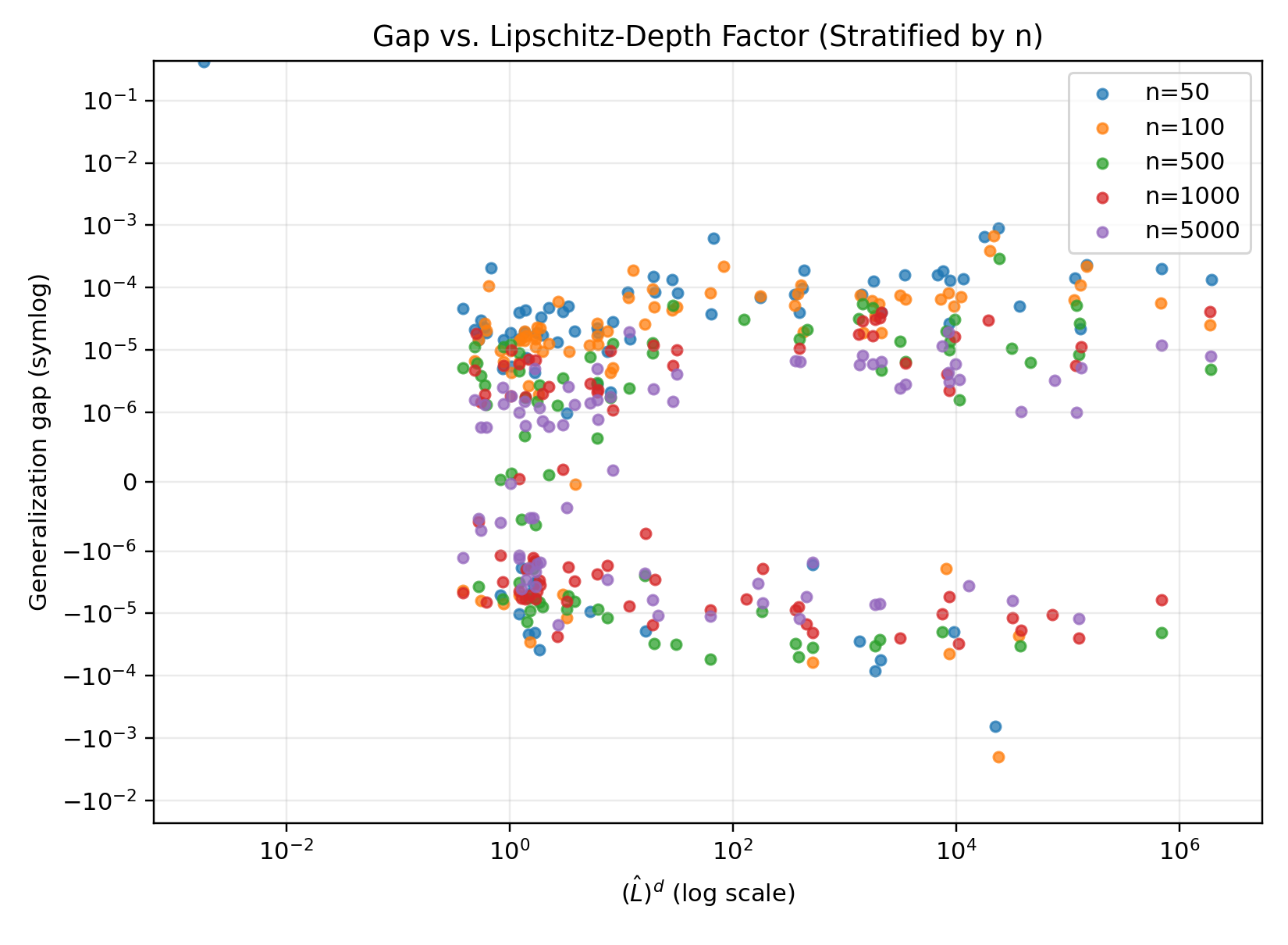}
    \caption{$|\mathrm{gap}|$ vs.\ $(\wh L)^d$, stratified by training size $n$.}
  \end{subfigure}
  \caption{Factorization diagnostics computed from the same logged runs (no retraining).}
  \label{fig:factor-split}
\end{figure*}

\Cref{fig:heatmap-box} summarizes the full $(d,n)$ grid using median $\log_{10}|\mathrm{gap}|$ (heatmap) and the marginal distribution by depth (boxplots).

\begin{figure*}[t]
  \centering
  \begin{subfigure}[t]{0.48\textwidth}
    \centering
    \includegraphics[width=0.9\linewidth]{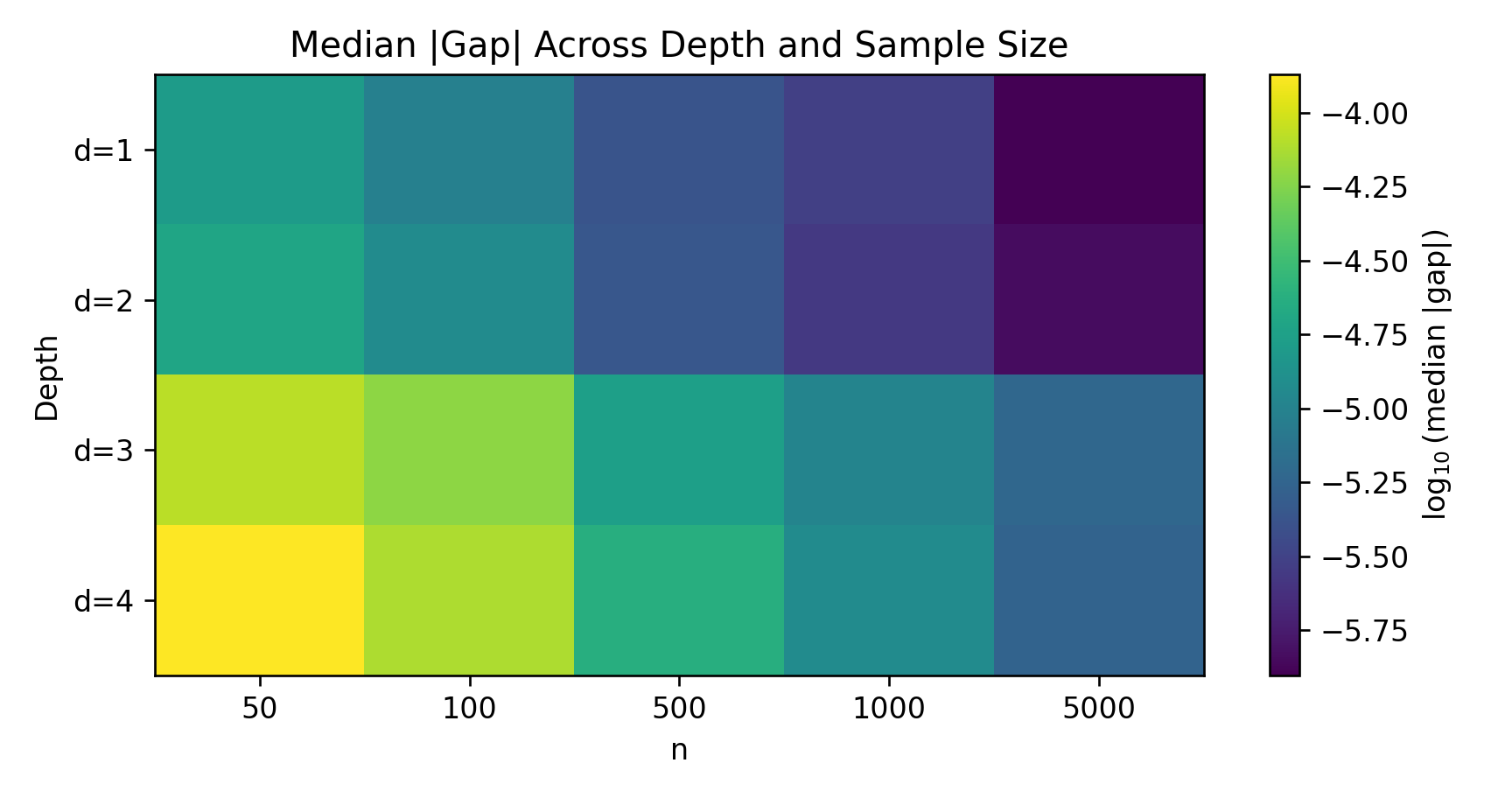}
    \caption{Heatmap of median $\log_{10}|\mathrm{gap}|$ over depth and $n$.}
  \end{subfigure}
  \hfill
  \begin{subfigure}[t]{0.48\textwidth}
    \centering
    \includegraphics[width=0.9\linewidth]{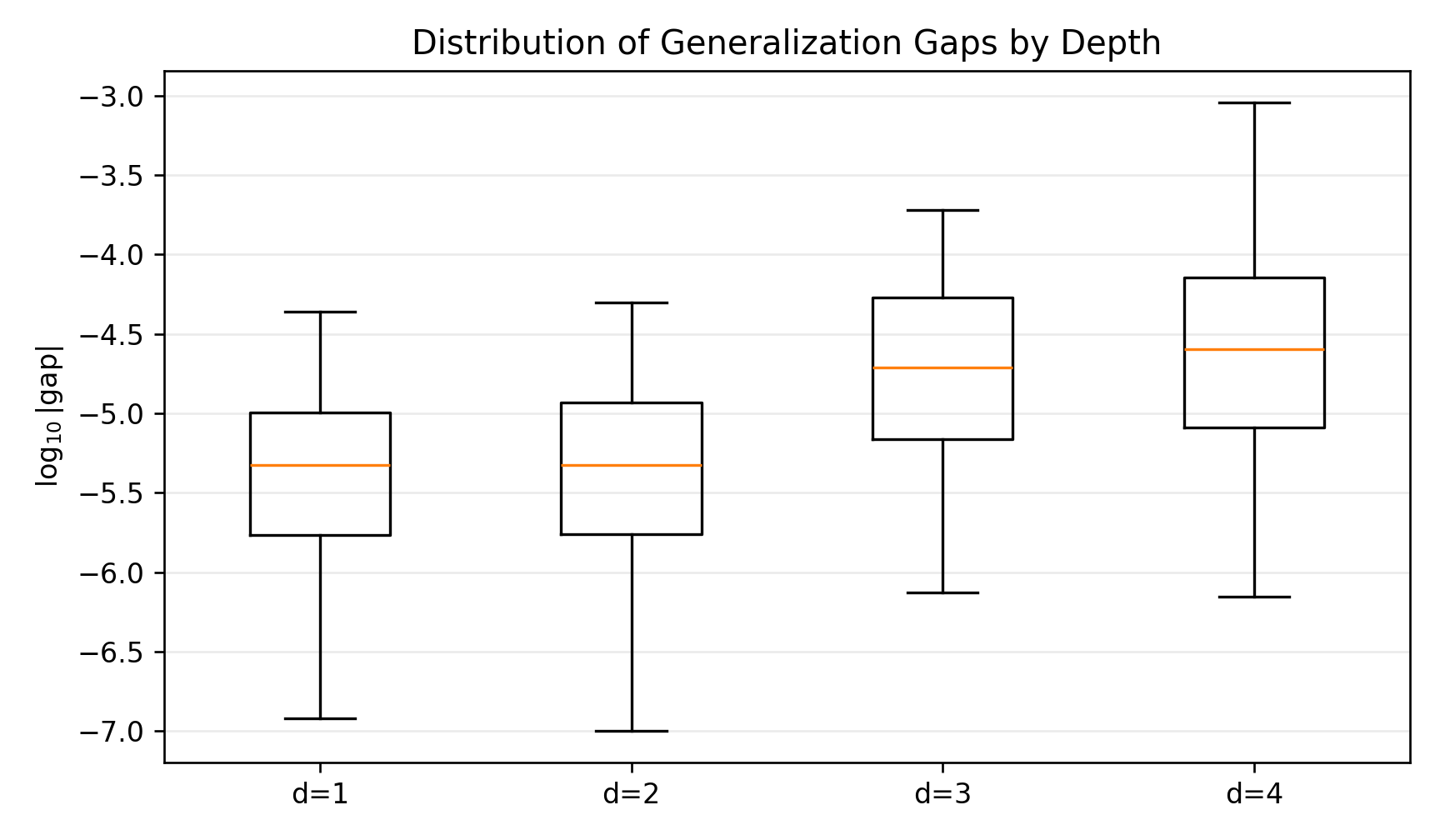}
    \caption{Distribution of $\log_{10}|\mathrm{gap}|$ aggregated across $(n,\mathrm{seed})$ within each depth.}
  \end{subfigure}
  \caption{Descriptive summaries of the sweep beyond scatter plots (\cref{fig:gap-vs-term}).}
  \label{fig:heatmap-box}
\end{figure*}

\paragraph{Quantitative fit on the full sweep.}
Using $400$ trained models (depths $d\in\{1,2,3,4\}$, sample sizes $n\in\{50,100,500,1000,5000\}$, $20$ random seeds each), we observe a positive association between the predicted term and the observed gap.
On a log-log scale, the Pearson correlation between $\log((\wh L^d)/\sqrt{n})$ and $\log|\text{gap}|$ is $0.48$.
Fitting a power-law model $\log|\text{gap}| \approx \alpha \log((\wh L^d)/\sqrt{n}) + c$ yields $\alpha\approx 0.20$ with $R^2\approx 0.23$.
These summary statistics indicate that the compositional complexity term captures a meaningful fraction of the variance in generalization gaps across tasks and regimes, while leaving room for additional factors such as optimization error, noise level, and local-range effects for $\exp(\cdot)$.
The fitted exponent $\alpha\ll 1$ is consistent with substantial nuisance variation---including finite-batch noise in $\wh L$, imperfect convergence, and mismatch between local derivative norms and worst-case Lipschitz constants appearing in uniform bounds---rather than with falsification of the qualitative roles of $n$ and compositional amplification.
Per-setting means and standard deviations over seeds appear in \cref{tab:sweep-summary}.

\paragraph{Reading \cref{fig:factor-split} and \cref{fig:heatmap-box}.}
\Cref{fig:factor-split} factorizes the predicted scaling into $1/\sqrt{n}$ (by depth) and $(\wh L)^d$ (by $n$); both views align qualitatively with \cref{eq:main-bound}.
\Cref{fig:heatmap-box} summarizes median $\log_{10}|\mathrm{gap}|$ over the $(d,n)$ grid and marginal depth distributions: larger $n$ uniformly reduces gaps, while deeper compositions retain larger medians and heavier tails, consistent with harder optimization under $\exp(\cdot)$.

\paragraph{Limitations.}
Our empirical study is deliberately controlled: inputs are one-dimensional and uniformly bounded, targets lie in a realizability-aligned architecture family (correct depth and operator set), and distribution shift is absent.
The estimator $\wh L$ measures stability of the \emph{trained} map on in-distribution inputs rather than certifying a worst-case global Lipschitz constant over all structured hypotheses at depth $d$; for compositions involving $\exp(\cdot)$, local slopes can grow rapidly with learned ranges, so $\wh L$ need not reflect uniform worst-case operator stability without additional envelope constraints or interval arithmetic.
Finally, uniform bounds such as \cref{eq:main-bound} can be loose compared with hypothesis-dependent bounds when stronger structural priors are available \cite{mcallester1999pacbayes,maurer2011structured}.

\section{Conclusion}
We provided a PAC learning perspective on scientific discovery with compositional function trees.
Despite the combinatorial explosion of symbolic structures, the statistical generalization of depth-$d$ compositional hypotheses is controlled by stability: Lipschitz constants accumulate multiplicatively along paths with an explicit $(Kb\sqrt{2}\,L)^{d-1}$ prefactor on $\Rad_n(\cH_{\mathrm{comp}}^{1})$ (\cref{eq:rad-depth}), yielding excess-risk scalings of order $\mathcal{O}(L^{d}/\sqrt{n})$ when $K,b$ are treated as $O(1)$.
Our experiments with differentiable operator trees confirm that the generalization gap tracks the predicted complexity term.
These results support the view that, for stable operator vocabularies and modest depth, scientific discovery from small datasets is statistically feasible; the remaining challenge is computational search and inductive bias design.

Natural extensions include multi-dimensional covariates $x\in\RR^{p}$ with coordinate-wise or norm-based Lipschitz bookkeeping, agnostic risk decompositions when the truth lies outside $\cH_{\mathrm{comp}}^{d}$, and combining PAC-Bayes priors over discrete syntax with the contraction estimates presented here.
Closing that loop would align statistical certificates more tightly with end-to-end SR systems that alternate discrete proposals with continuous refitting.

\section*{Accessibility}
Technical content is fully stated in the main text (\cref{sec:prelim,sec:main-theory,sec:experiments}); figures and appendix tables are illustrative.
Plots use distinct colors and markers for depth-stratified series; numeric values also appear in the referenced equations and tables.

\section*{Software and Data}
Experiments are implemented in Python using PyTorch; dependency versions are listed alongside the code.
Synthetic data are generated in closed form as described in \cref{sec:experiments}; we use no confidential or personally identifiable data.
Reproducibility materials are available at \url{https://github.com/suayptalha/scientific-pac}.

\section*{Impact Statement}
This work gives learning-theoretic foundations for compositional symbolic models: stability and depth shape statistical complexity in ways that complement purely combinatorial views of expression search.
The emphasis on contraction along fixed trees clarifies that bounded-depth estimation with stable primitives need not inherit worst-case syntax-counting intuitions often cited against symbolic regression.
The perspective is complementary to algorithm design: it does not prescribe a particular symbolic search procedure, but clarifies the statistical scaling of nonlinear coefficient estimation once a vocabulary and depth budget are fixed.
Epistemic risks include trusting compact formulas under distribution shift, misspecification, or confounding; our guarantees do not address causal correctness or out-of-distribution validity.
Faster automated hypothesis discovery may also compress publication and decision timelines unless paired with scrutiny suited to the stakes.
Openly stating assumed covariate ranges and validating extrapolation beyond training coverage---especially when nonlinear primitives amplify slopes---offers a lightweight mitigation aligned with the paper's local stability viewpoint.
Responsible deployment should pair such tools with domain theory, uncertainty quantification, replication, and appropriate governance in high-stakes settings.

\bibliography{references}
\bibliographystyle{icml2026}

\newpage
\appendix
\onecolumn

\section{Additional Proof Details}
\label{app:proofs}
We expand the argument underlying Lemmas~\ref{lem:comp-rad}--\ref{lem:finite-union} and Corollary~\ref{cor:branching}.
The key tool is Maurer's vector contraction \cite{maurer2016vectorcontraction}: for an $L$-Lipschitz map $\Phi:\RR^m\to\RR$ with $\Phi(0)=0$ and a class $\cG\subset(\cX\to\RR^m)$ measured by the coordinatewise complexity \eqref{eq:vec-rad},
\(
\Rad_S(\Phi\circ \cG)\le \sqrt{2}\,L\,\Rad_S^{(m)}(\cG).
\)
In our tree setting, each internal node computes $h(z_1,\dots,z_b)$ where $h\in\cH_{\mathrm{base}}$ is $L_h$-Lipschitz and $(z_1,\dots,z_b)$ are subtree outputs.
Define the vector-valued class $\cG_{d-1}=\{x\mapsto (g_1(x),\dots,g_b(x)):\ g_j\in\cH_{\mathrm{comp}}^{d-1}\}$.
For fixed $h_k\in\cH_{\mathrm{base}}$, set $\cC_k=\{h_k\circ g:\ g\in\cG_{d-1}\}$.
Lemma~\ref{lem:finite-union} gives $\Rad_S(\cH_{\mathrm{comp}}^{d})\le K\max_k \Rad_S(\cC_k)$.
For each $k$, Lemma~\ref{lem:comp-rad} yields $\Rad_S(\cC_k)\le \sqrt{2}\,L_k\,\Rad_S^{(b)}(\cG_{d-1})$.
Lemma~\ref{lem:product} (and its $b$-fold analogue) gives $\Rad_S^{(b)}(\cG_{d-1})\le b\,\Rad_S(\cH_{\mathrm{comp}}^{d-1})$ for binary $b{=}2$.
Combining, $\Rad_S(\cH_{\mathrm{comp}}^{d})\le Kb\sqrt{2}\,L\,\Rad_S(\cH_{\mathrm{comp}}^{d-1})$ and iterating yields \cref{eq:rad-depth,eq:rad-2L}.

\subsection{Proof of Lemma~\ref{lem:product}}
Let $S=(x_i)_{i=1}^n$.
For the paired class $\cF_1\times \cF_2$, define the empirical Rademacher complexity using independent signs for the two coordinates:
\begin{align}
  \Rad_S(\cF_1\times \cF_2)
  &= \EE_{\sigma^{(1)},\sigma^{(2)}}\Big[\sup_{f_1\in\cF_1,f_2\in\cF_2}\frac{1}{n}\sum_{i=1}^n \sigma_i^{(1)} f_1(x_i) + \sigma_i^{(2)} f_2(x_i)\Big] \\
  &= \EE\Big[\sup_{f_1\in\cF_1}\frac{1}{n}\sum_{i=1}^n \sigma_i^{(1)} f_1(x_i)\Big]
   + \EE\Big[\sup_{f_2\in\cF_2}\frac{1}{n}\sum_{i=1}^n \sigma_i^{(2)} f_2(x_i)\Big] \\
  &= \Rad_S(\cF_1)+\Rad_S(\cF_2).
\end{align}
The second line uses separability of the supremum over independent function choices.

\subsection{Deriving Theorem~\ref{thm:rad-constant} via contraction}
We show the inductive step in \eqref{eq:rad-2L}.
Let $\cH_d:=\cH_{\mathrm{comp}}^{d}$.
By Lemma~\ref{lem:finite-union}, $\Rad_S(\cH_d)\le K\max_{h\in\cH_{\mathrm{base}}}\Rad_S(\cC_h)$ where $\cC_h=\{h\circ(g_1,g_2): g_1,g_2\in\cH_{d-1}\}$ and $K{=}|\cH_{\mathrm{base}}|$.
Fix $h$ and write $f(x)=h(g_1(x),g_2(x))$.
By Maurer's vector contraction (Lemma~\ref{lem:comp-rad}) with $m{=}2$,
\begin{equation}
  \Rad_S(\cC_h)
  \le
  \sqrt{2}\,L_h\ \Rad_S^{(2)}(\cH_{d-1}\times \cH_{d-1})
  =
  \sqrt{2}\,L_h\ \Rad_S(\cH_{d-1}\times \cH_{d-1}).
\end{equation}
Lemma~\ref{lem:product} gives $\Rad_S(\cH_{d-1}\times \cH_{d-1})\le 2\,\Rad_S(\cH_{d-1})$.
Thus $\Rad_S(\cC_h)\le 2\sqrt{2}\,L_h\,\Rad_S(\cH_{d-1})$ and taking $\max_{h}$ yields $\Rad_S(\cH_d)\le K\cdot 2\sqrt{2}\,L\,\Rad_S(\cH_{d-1})$.
Iterating gives \eqref{eq:rad-2L}.

\subsection{Sketch: PAC-Bayes inequality \eqref{eq:pacbayes}}
For completeness we recall one route to \eqref{eq:pacbayes}.
Let $\ell\in[0,1]$ be a bounded loss and define empirical and true risks as before.
PAC-Bayes bounds control $\EE_{f\sim\rho}[R(f)]$ in terms of $\EE_{f\sim\rho}[\wh R_S(f)]$ plus a complexity penalty involving $\mathrm{KL}(\rho\|\pi)$; see \citet{mcallester1999pacbayes,langford2001pacbayes,catoni2007pacbayes,alquier2024pacbayes} for multiple variants and constants.
In our context, $\pi$ can be chosen to downweight large trees (e.g., via an MDL code length), encouraging posteriors supported on shallow/stable expressions.

\section{Symmetrization, Tail Bounds, and Talagrand-type Contractions}
\label{app:sym-tal}
This section collects standard ingredients used implicitly when translating Rademacher complexity into high-probability uniform bounds \cite{bartlett2002rademacher,mohri2018foundations,ledoux1991probability}.

\subsection{Rademacher symmetrization}
Let $\cF\subset(\cX\to\RR)$ and consider the empirical process supremum
\begin{equation}
  Z(S)\ :=\ \sup_{f\in\cF}\Big|\frac{1}{n}\sum_{i=1}^n \big(f(x_i)-\EE[f(X)]\big)\Big|.
\end{equation}
A symmetrization argument yields
\begin{equation}
  \EE[Z(S)] \ \le\ 2\,\EE_S\big[\Rad_S(\cF)\big],
\end{equation}
with $\Rad_S(\cF)$ as in the main text (expectation over independent Rademacher signs).
Thus controlling $\Rad_n(\cF)=\EE_S[\Rad_S(\cF)]$ controls the first moment of uniform deviations.

\subsection{McDiarmid bounded differences}
Assume $Z=Z(x_{1:n})$ satisfies bounded differences: changing one coordinate $x_i$ changes $Z$ by at most $c_i$.
Then for all $\varepsilon>0$,
\begin{equation}
  \PP\big(Z-\EE[Z]\ge \varepsilon\big)\ \le\ \exp\Big(-\frac{2\varepsilon^2}{\sum_{i=1}^n c_i^2}\Big).
\end{equation}
Applied with $c_i=\mathcal{O}(1/n)$ for bounded-range losses yields Gaussian concentration at rate $\mathcal{O}(n^{-1/2})$ after centering.

\subsection{Scalar contraction}
Let $\phi:\RR\to\RR$ be $L$-Lipschitz and $\phi(0)=0$.
Then for any finite sample $S$,
\begin{equation}
  \Rad_S(\phi\circ\cF)\ \le\ L\,\Rad_S(\cF).
\end{equation}
This is the one-dimensional specialization of \cref{eq:vector-contraction} used repeatedly when peeling unary operators off a composition chain.

\subsection{Unary composition chains}
\label{app:unary-chain}
Let $\cF_0\subset(\cX\to\RR)$ and define inductively $\cF_k=\{\phi_k\circ f:\ f\in\cF_{k-1}\}$ where each $\phi_k:\RR\to\RR$ is $L_k$-Lipschitz on the relevant range.
Then for every sample $S$,
\begin{equation}
  \Rad_S(\cF_k)\ \le\ \Big(\prod_{j=1}^k L_j\Big)\,\Rad_S(\cF_0).
\end{equation}
\begin{proof}
Induct on $k$, applying scalar contraction at each layer.
\end{proof}

\subsection{Dudley entropy integral (sketch)}
\label{app:dudley}
For many geometrically well-behaved classes, chaining yields
\begin{equation}
  \Rad_n(\cF)\ \lesssim\ \inf_{\alpha\ge0}\Big(\alpha + \frac{1}{\sqrt{n}}\int_{\alpha}^{\infty}\sqrt{\log \mathcal{N}(\varepsilon,\cF,\|\cdot\|_{n})}\,d\varepsilon\Big),
\end{equation}
where $\mathcal{N}(\varepsilon,\cF,\|\cdot\|_{n})$ is the $\varepsilon$-covering number of $\cF$ in the empirical $L_2(P_n)$ metric \cite{dudley1967sizes,vandervaart1996empirical,talagrand2005generic}.
Under Assumption~\ref{ass:affine-bound}, leaves lie in Lipschitz balls of radius $\mathcal O(W_{\max}X_{\max}\sqrt{p})$ over $\mathcal{X}$, while each remaining primitive has bounded subgraph complexity.
Thus $\cH_{\mathrm{comp}}^{1}$ is a finite union of uniformly Lipschitz families with bounded envelopes, giving $\Rad_n(\cH_{\mathrm{comp}}^{1})=\mathcal O(n^{-1/2})$ by standard VC/Rademacher chaining under fixed vocabulary size \cite{bartlett2002rademacher,mohri2018foundations}.

\subsection{Agnostic decomposition}
\label{app:agnostic}
Let $h^\star\in\arg\min_{h\in\cH_{\mathrm{comp}}^{d}} R(h)$ be a population risk minimizer within the class.
For any empirical risk minimizer $\wh h$,
\begin{equation}
  R(\wh h)-\inf_{h\in\cH} R(h)
  \ =\
  \big(R(\wh h)-\wh R_S(\wh h)\big)
  +\big(\wh R_S(\wh h)-\wh R_S(h^\star)\big)
  +\big(\wh R_S(h^\star)-R(h^\star)\big).
\end{equation}
The middle term is $\le 0$ by definition of ERM, while the two bracketed differences are controlled by uniform deviations over $\cH_{\mathrm{comp}}^{d}$, yielding the $\mathcal{O}(L^{d}/\sqrt{n})$ variance term under bounded loss/Lipschitzness after absorbing arity factors as in \cref{eq:rad-depth}.
If the truth lies outside $\cH_{\mathrm{comp}}^{d}$, add an irreducible approximation error $\inf_{h\in\cH_{\mathrm{comp}}^{d}}R(h)-R^\star$ where $R^\star$ is the Bayes risk under squared loss.

\section{Detailed Risk Bound for Bounded Squared Loss}
\label{app:risk-proof}
We sketch Theorem~\ref{thm:gen} with constants tracked only at the level of $\mathcal{O}(\cdot)$.
Assume $|y|\le B$ and $|f(x)|\le B$ uniformly over $\cH_{\mathrm{comp}}^{d}$.
For squared loss $\ell(\hat y,y)=(\hat y-y)^2$, on $[-B,B]^2$ one has $|\partial_{\hat y}\ell|\le 4B$, hence $\ell(\cdot,y)$ is $(4B)$-Lipschitz in $\hat y$.
Talagrand-type contraction for Lipschitz composition with squared loss implies $\Rad_S(\ell\circ\cH_{\mathrm{comp}}^{d})\le (4B)\cdot (Kb\sqrt{2}\,L)^{d-1}\Rad_S(\cH_{\mathrm{comp}}^{1})$ up to fixed vocabulary/arity factors (\cref{eq:rad-depth}).
Combining symmetrization, concentration on $\sup_{f\in\cH}(R(f)-\wh R_S(f))$, and bounded loss yields \cref{eq:main-bound}.

\section{Experimental Details and Reproducibility}
\label{app:exp}
The training code evaluates every combination of depths $d\in\{1,2,3,4\}$ and sample sizes $n\in\{50,100,500,1000,5000\}$ with multiple random seeds.
Models are trained with Adam on MSE until training loss stops improving (patience-based early stopping).
The Lipschitz proxy $\wh L$ is computed on a large batch of test-distribution inputs using either gradient norms or composed operator bounds.
The primary empirical summary plots the generalization gap against $(\wh L^d)/\sqrt{n}$.

\subsection{Wide-format numeric tables}
\label{app:wide-tables}
\Cref{tab:sweep-summary} reports mean$\pm$std over seeds for train/test error, gap magnitude, $\wh L$, and $(\wh L^{d})/\sqrt{n}$.
Wide numeric layouts are set in a smaller type size and scaled to the appendix text width so columns remain legible.
\Cref{tab:gap-grid} gives the same sweep as a compact median grid of $|\mathrm{gap}|$ over $(d,n)$, consistent with \cref{fig:heatmap-box}.

\input{tables/sweep_summary.tex}
\input{tables/gap_grid.tex}

\Cref{fig:factor-split} (factorization panels) and \cref{fig:heatmap-box} (heatmap and boxplots) appear in the main text.

\section{Appendix: Connection to SR Search and MDL}
Theorem~\ref{thm:gen} concerns statistical generalization for a fixed hypothesis class.
Symbolic regression additionally faces discrete search over structures, often guided by minimum-description-length (MDL) principles and sparsity \cite{rissanen1978mdl,brunton2016sindy}.
A promising direction is to combine our Lipschitz-depth certificate with MDL priors over trees to obtain bounds that trade off depth, stability, and description length.

\end{document}

%% file: tables/main_summary.tex
\begin{table*}[t]
\centering
\caption{Compact sweep summary (median over seeds).}
\label{tab:main-summary}
\begin{small}
\begin{tabular}{ccccc}
\toprule
Depth & $n$ & median $|\mathrm{gap}|$ & median $\hat L$ & median $(\hat L^d)/\sqrt{n}$ \\
\midrule
1 & 50 & 1.63e-05 & 1.30e+00 & 1.83e-01 \\
1 & 5000 & 1.25e-06 & 1.30e+00 & 1.83e-02 \\
\addlinespace
2 & 50 & 1.97e-05 & 1.77e+00 & 4.42e-01 \\
2 & 5000 & 1.45e-06 & 1.77e+00 & 4.43e-02 \\
\addlinespace
3 & 50 & 8.30e-05 & 7.40e+00 & 5.73e+01 \\
3 & 5000 & 5.99e-06 & 7.54e+00 & 6.07e+00 \\
\addlinespace
4 & 50 & 1.35e-04 & 1.01e+01 & 1.50e+03 \\
4 & 5000 & 5.53e-06 & 1.01e+01 & 1.46e+02 \\
\addlinespace
\bottomrule
\end{tabular}
\end{small}
\end{table*}

%% file: tables/sweep_summary.tex
\begin{table}[t]
\centering
\caption{Full sweep summary (mean$\pm$std over seeds).}
\label{tab:sweep-summary}
\begin{footnotesize}
\setlength{\tabcolsep}{4pt}
\resizebox{\linewidth}{!}{%
\begin{tabular}{@{}ccccccc@{}}
\toprule
$d$ & $n$ & Train MSE & Test MSE & $|\mathrm{gap}|$ & $\hat L$ & $(\hat L^{d})/\sqrt{n}$ \\
\midrule
1 & 50 & 9.44e-05$\pm$1.9e-05 & 1.04e-04$\pm$3.3e-06 & 1.73e-05$\pm$1.3e-05 & 1.28e+00$\pm$4.3e-01 & 1.81e-01$\pm$6.1e-02 \\
1 & 100 & 9.50e-05$\pm$1.3e-05 & 1.02e-04$\pm$2.0e-06 & 1.19e-05$\pm$8.3e-06 & 1.28e+00$\pm$4.3e-01 & 1.28e-01$\pm$4.3e-02 \\
1 & 500 & 1.02e-04$\pm$6.4e-06 & 1.01e-04$\pm$9.8e-07 & 5.16e-06$\pm$4.2e-06 & 1.28e+00$\pm$4.3e-01 & 5.72e-02$\pm$1.9e-02 \\
1 & 1000 & 1.01e-04$\pm$4.1e-06 & 1.01e-04$\pm$1.2e-06 & 3.75e-06$\pm$2.6e-06 & 1.28e+00$\pm$4.3e-01 & 4.05e-02$\pm$1.4e-02 \\
1 & 5000 & 1.00e-04$\pm$1.7e-06 & 1.00e-04$\pm$7.6e-07 & 1.55e-06$\pm$1.2e-06 & 1.28e+00$\pm$4.3e-01 & 1.81e-02$\pm$6.1e-03 \\
\addlinespace
2 & 50 & 9.08e-05$\pm$1.8e-05 & 1.08e-04$\pm$8.4e-06 & 2.08e-05$\pm$1.6e-05 & 1.75e+00$\pm$7.5e-01 & 5.08e-01$\pm$3.8e-01 \\
2 & 100 & 9.42e-05$\pm$1.2e-05 & 1.04e-04$\pm$3.8e-06 & 1.21e-05$\pm$6.9e-06 & 1.75e+00$\pm$7.5e-01 & 3.59e-01$\pm$2.7e-01 \\
2 & 500 & 1.01e-04$\pm$6.5e-06 & 1.01e-04$\pm$1.0e-06 & 5.04e-06$\pm$4.0e-06 & 1.75e+00$\pm$7.5e-01 & 1.60e-01$\pm$1.2e-01 \\
2 & 1000 & 1.01e-04$\pm$4.4e-06 & 1.01e-04$\pm$9.1e-07 & 3.47e-06$\pm$2.5e-06 & 1.75e+00$\pm$7.5e-01 & 1.14e-01$\pm$8.5e-02 \\
2 & 5000 & 1.00e-04$\pm$1.9e-06 & 1.01e-04$\pm$7.3e-07 & 1.63e-06$\pm$1.2e-06 & 1.75e+00$\pm$7.5e-01 & 5.08e-02$\pm$3.8e-02 \\
\addlinespace
3 & 50 & 6.24e-02$\pm$2.8e-01 & 8.31e-02$\pm$3.7e-01 & 2.07e-02$\pm$9.2e-02 & 8.12e+00$\pm$5.9e+00 & 2.09e+02$\pm$3.5e+02 \\
3 & 100 & 3.70e-04$\pm$4.6e-05 & 4.18e-04$\pm$1.3e-05 & 5.90e-05$\pm$2.4e-05 & 8.82e+00$\pm$5.8e+00 & 1.61e+02$\pm$2.4e+02 \\
3 & 500 & 4.04e-04$\pm$2.7e-05 & 4.05e-04$\pm$3.1e-06 & 2.17e-05$\pm$1.6e-05 & 8.84e+00$\pm$5.8e+00 & 7.27e+01$\pm$1.1e+02 \\
3 & 1000 & 4.03e-04$\pm$1.7e-05 & 4.03e-04$\pm$2.5e-06 & 1.38e-05$\pm$1.0e-05 & 8.84e+00$\pm$5.8e+00 & 5.15e+01$\pm$7.9e+01 \\
3 & 5000 & 4.01e-04$\pm$7.8e-06 & 4.02e-04$\pm$2.6e-06 & 6.65e-06$\pm$4.7e-06 & 8.84e+00$\pm$5.8e+00 & 2.30e+01$\pm$3.5e+01 \\
\addlinespace
4 & 50 & 1.37e-03$\pm$3.8e-03 & 1.52e-03$\pm$3.6e-03 & 2.26e-04$\pm$2.6e-04 & 1.18e+01$\pm$9.3e+00 & 2.26e+04$\pm$6.4e+04 \\
4 & 100 & 1.40e-03$\pm$3.8e-03 & 1.42e-03$\pm$3.4e-03 & 2.25e-04$\pm$4.5e-04 & 1.18e+01$\pm$9.3e+00 & 1.55e+04$\pm$4.3e+04 \\
4 & 500 & 6.83e-04$\pm$8.4e-04 & 7.03e-04$\pm$8.7e-04 & 3.67e-05$\pm$6.3e-05 & 1.20e+01$\pm$9.3e+00 & 7.05e+03$\pm$2.0e+04 \\
4 & 1000 & 5.01e-04$\pm$2.8e-04 & 5.04e-04$\pm$2.9e-04 & 1.63e-05$\pm$1.1e-05 & 1.21e+01$\pm$9.3e+00 & 4.99e+03$\pm$1.4e+04 \\
4 & 5000 & 4.19e-04$\pm$3.9e-05 & 4.20e-04$\pm$3.8e-05 & 6.44e-06$\pm$4.8e-06 & 1.21e+01$\pm$9.3e+00 & 2.25e+03$\pm$6.2e+03 \\
\addlinespace
\bottomrule
\end{tabular}%
}
\end{footnotesize}
\end{table}

%% file: tables/gap_grid.tex
\begin{table}[t]
\centering
\caption{Median $|\mathrm{gap}|$ (grid over seeds).}
\label{tab:gap-grid}
\begin{scriptsize}
\setlength{\tabcolsep}{3pt}
\resizebox{0.58\linewidth}{!}{%
\begin{tabular}{@{}rccccc@{}}
\toprule
\multicolumn{1}{r}{} & $50$ & $100$ & $500$ & $1000$ & $5000$ \\
\midrule
1 & 1.63e-05 & 9.67e-06 & 4.22e-06 & 3.05e-06 & 1.25e-06 \\
2 & 1.97e-05 & 1.18e-05 & 4.45e-06 & 2.74e-06 & 1.45e-06 \\
3 & 8.30e-05 & 6.09e-05 & 1.74e-05 & 1.03e-05 & 5.99e-06 \\
4 & 1.35e-04 & 7.58e-05 & 2.36e-05 & 1.19e-05 & 5.53e-06 \\
\bottomrule
\end{tabular}%
}
\end{scriptsize}
\end{table}

%% file: paper.bbl
\begin{thebibliography}{34}
\providecommand{\natexlab}[1]{#1}
\providecommand{\url}[1]{\texttt{#1}}
\expandafter\ifx\csname urlstyle\endcsname\relax
  \providecommand{\doi}[1]{doi: #1}\else
  \providecommand{\doi}{doi: \begingroup \urlstyle{rm}\Url}\fi

\bibitem[Alquier(2024)]{alquier2024pacbayes}
Alquier, P.
\newblock User-friendly introduction to {PAC}-{B}ayes bounds, 2024.
\newblock URL \url{https://arxiv.org/abs/2110.11216}.
\newblock Foundations and Trends in Machine Learning.

\bibitem[Bartlett \& Mendelson(2003)Bartlett and
  Mendelson]{bartlett2002rademacher}
Bartlett, P.~L. and Mendelson, S.
\newblock Rademacher and gaussian complexities: Risk bounds and structural
  results.
\newblock \emph{J. Mach. Learn. Res.}, 3:\penalty0 463--482, 2003.
\newblock URL \url{https://api.semanticscholar.org/CorpusID:463216}.

\bibitem[Bartlett et~al.(2017)Bartlett, Foster, and
  Telgarsky]{bartlett2017spectrally}
Bartlett, P.~L., Foster, D.~J., and Telgarsky, M.~J.
\newblock Spectrally-normalized margin bounds for neural networks, 2017.
\newblock URL \url{https://arxiv.org/abs/1706.08498}.

\bibitem[Brunton et~al.(2016)Brunton, Proctor, and Kutz]{brunton2016sindy}
Brunton, S.~L., Proctor, J.~L., and Kutz, J.~N.
\newblock Discovering governing equations from data by sparse identification of
  nonlinear dynamical systems.
\newblock \emph{Proceedings of the National Academy of Sciences}, 113\penalty0
  (15):\penalty0 3932--3937, 2016.
\newblock \doi{10.1073/pnas.1517384113}.

\bibitem[Catoni(2007)]{catoni2007pacbayes}
Catoni, O.
\newblock \emph{{PAC}-{B}ayesian Supervised Classification: The Thermodynamics
  of Statistical Learning}, volume~56 of \emph{Lecture Notes---Monograph
  Series}.
\newblock Institute of Mathematical Statistics, Beachwood, Ohio, 2007.
\newblock ISBN 978-0-940600-68-7.
\newblock \doi{10.1214/074921707000000391}.
\newblock URL \url{https://doi.org/10.1214/074921707000000391}.

\bibitem[Cranmer(2023)]{cranmer2023pysr}
Cranmer, M.
\newblock Interpretable machine learning for science with {PySR} and
  {S}ymbolic{R}egression.jl, 2023.
\newblock URL \url{https://arxiv.org/abs/2305.01582}.

\bibitem[Cranmer et~al.(2020)Cranmer, Sanchez-Gonzalez, Battaglia, Xu, Cranmer,
  Spergel, and Ho]{cranmer2020discovering}
Cranmer, M., Sanchez-Gonzalez, A., Battaglia, P., Xu, R., Cranmer, K., Spergel,
  D., and Ho, S.
\newblock Discovering symbolic models from deep learning with inductive biases,
  2020.
\newblock URL \url{https://arxiv.org/abs/2006.11287}.

\bibitem[Dudley(1967)]{dudley1967sizes}
Dudley, R.~M.
\newblock The sizes of compact subsets of hilbert space and continuity of
  gaussian processes.
\newblock \emph{Journal of Functional Analysis}, 1:\penalty0 125--165, 1967.
\newblock URL \url{https://api.semanticscholar.org/CorpusID:122249056}.

\bibitem[Kanade et~al.(2024)Kanade, Rebeschini, and
  Vaskevicius]{kanade2024exponential}
Kanade, V., Rebeschini, P., and Vaskevicius, T.
\newblock Exponential tail local {R}ademacher complexity risk bounds without
  the {B}ernstein condition.
\newblock \emph{Journal of Machine Learning Research}, 25\penalty0
  (388):\penalty0 1--43, 2024.
\newblock URL \url{http://jmlr.org/papers/v25/23-0063.html}.

\bibitem[Koltchinskii \& Panchenko(2002)Koltchinskii and
  Panchenko]{koltchinskii2002local}
Koltchinskii, V. and Panchenko, D.
\newblock Empirical margin distributions and bounding the generalization error
  of combined classifiers.
\newblock \emph{Annals of Statistics}, 30:\penalty0 1--50, 2002.
\newblock URL \url{https://api.semanticscholar.org/CorpusID:2307733}.

\bibitem[Koza(1992)]{koza1992genetic}
Koza, J.~R.
\newblock \emph{Genetic Programming: On the Programming of Computers by Means
  of Natural Selection}.
\newblock MIT Press, Cambridge, MA, 1992.
\newblock ISBN 978-0-262-11170-6.

\bibitem[La~Cava et~al.(2021)La~Cava, Orzechowski, Burlacu, Olivetti~de
  Fran{\c{c}}a, Virgolin, Jin, Kommenda, and Moore]{la2021srbench}
La~Cava, W., Orzechowski, P., Burlacu, B., Olivetti~de Fran{\c{c}}a, F.,
  Virgolin, M., Jin, Y., Kommenda, M., and Moore, J.~H.
\newblock Contemporary symbolic regression methods and their relative
  performance, 2021.
\newblock URL \url{https://arxiv.org/abs/2107.14351}.
\newblock NeurIPS 2021 Datasets and Benchmarks Track; introduces the {SRBench}
  suite.

\bibitem[Langford \& Seeger(2001)Langford and Seeger]{langford2001pacbayes}
Langford, J. and Seeger, M.
\newblock Bounds for averaging classifiers.
\newblock 02 2001.

\bibitem[Ledoux \& Talagrand(1991)Ledoux and Talagrand]{ledoux1991probability}
Ledoux, M. and Talagrand, M.
\newblock \emph{Probability in {B}anach Spaces: Isoperimetry and Processes}.
\newblock Springer-Verlag, Berlin, 1991.
\newblock ISBN 978-3-540-52013-9.
\newblock \doi{10.1007/978-3-642-20212-4}.

\bibitem[Lei et~al.(2016)Lei, Ding, and Bi]{lei2016local}
Lei, Y., Ding, L., and Bi, Y.
\newblock Local {R}ademacher complexity bounds based on covering numbers.
\newblock \emph{Neurocomputing}, 218:\penalty0 320--330, 2016.
\newblock \doi{10.1016/j.neucom.2016.08.074}.

\bibitem[Makke \& Chawla(2024)Makke and Chawla]{makke2024symbolic}
Makke, N. and Chawla, S.
\newblock Interpretable scientific discovery with symbolic regression: a
  review.
\newblock \emph{Artificial Intelligence Review}, 57\penalty0 (1), 2024.
\newblock \doi{10.1007/s10462-023-10622-0}.
\newblock URL \url{https://doi.org/10.1007/s10462-023-10622-0}.
\newblock Article number 2.

\bibitem[Maurer(2016{\natexlab{a}})]{maurer2016chain}
Maurer, A.
\newblock A chain rule for the expected suprema of {G}aussian processes.
\newblock \emph{Theoretical Computer Science}, 650:\penalty0 109--122,
  2016{\natexlab{a}}.
\newblock \doi{10.1016/j.tcs.2016.07.034}.

\bibitem[Maurer(2016{\natexlab{b}})]{maurer2016vectorcontraction}
Maurer, A.
\newblock A vector-contraction inequality for {R}ademacher complexities,
  2016{\natexlab{b}}.
\newblock URL \url{https://arxiv.org/abs/1605.00251}.

\bibitem[Maurer \& Pontil(2011)Maurer and Pontil]{maurer2011structured}
Maurer, A. and Pontil, M.
\newblock Structured sparsity and generalization, 2011.
\newblock URL \url{https://arxiv.org/abs/1108.3476}.

\bibitem[McAllester(1999)]{mcallester1999pacbayes}
McAllester, D.~A.
\newblock Some {PAC}-{B}ayesian theorems.
\newblock \emph{Machine Learning}, 37\penalty0 (3):\penalty0 355--363, 1999.
\newblock \doi{10.1023/A:1007618624809}.
\newblock URL \url{https://doi.org/10.1023/A:1007618624809}.

\bibitem[Mohri et~al.(2018)Mohri, Rostamizadeh, and
  Talwalkar]{mohri2018foundations}
Mohri, M., Rostamizadeh, A., and Talwalkar, A.
\newblock \emph{Foundations of Machine Learning}.
\newblock MIT Press, Cambridge, MA, 2nd edition, 2018.
\newblock ISBN 9780262039406.

\bibitem[Neyshabur et~al.(2017)Neyshabur, Bhojanapalli, McAllester, and
  Srebro]{neyshabur2017exploring}
Neyshabur, B., Bhojanapalli, S., McAllester, D., and Srebro, N.
\newblock Exploring generalization in deep learning, 2017.
\newblock URL \url{https://arxiv.org/abs/1706.08947}.

\bibitem[Petersen et~al.(2021)Petersen, Landajuela, Mundhenk, Santiago, Kim,
  and Kim]{petersen2021equation}
Petersen, B.~K., Landajuela, M., Mundhenk, T.~N., Santiago, C.~P., Kim, S.~K.,
  and Kim, J.~T.
\newblock Deep symbolic regression: Recovering mathematical expressions from
  data via risk-seeking policy gradients, 2021.
\newblock URL \url{https://arxiv.org/abs/1912.04871}.
\newblock Published at ICLR 2021.

\bibitem[Rissanen(1978)]{rissanen1978mdl}
Rissanen, J.
\newblock Modeling by shortest data description.
\newblock \emph{Automatica}, 14\penalty0 (5):\penalty0 465--471, 1978.
\newblock \doi{10.1016/0005-1098(78)90005-5}.

\bibitem[Schmidt \& Lipson(2009{\natexlab{a}})Schmidt and
  Lipson]{schmidt2009distilling}
Schmidt, M.~D. and Lipson, H.
\newblock Distilling free-form natural laws from experimental data.
\newblock \emph{Science}, 324\penalty0 (5923):\penalty0 81--85,
  2009{\natexlab{a}}.
\newblock \doi{10.1126/science.1165893}.

\bibitem[Schmidt \& Lipson(2009{\natexlab{b}})Schmidt and
  Lipson]{schmidt2009implicit}
Schmidt, M.~D. and Lipson, H.
\newblock Symbolic regression of implicit equations.
\newblock In \emph{Genetic Programming Theory and Practice VII}, Genetic and
  Evolutionary Computation, pp.\  73--85. Springer, New York,
  2009{\natexlab{b}}.
\newblock \doi{10.1007/978-1-4419-1626-6\_5}.
\newblock URL \url{https://doi.org/10.1007/978-1-4419-1626-6\_5}.

\bibitem[Seeger(2002)]{seeger2002pacbayes}
Seeger, M.
\newblock Pac-bayesian generalization error bounds for gaussian process
  classification.
\newblock \emph{Journal of Machine Learning Research}, 3, 08 2002.
\newblock \doi{10.1162/153244303765208386}.

\bibitem[Shalev-Shwartz \& Ben-David(2014)Shalev-Shwartz and
  Ben-David]{shalev2014understanding}
Shalev-Shwartz, S. and Ben-David, S.
\newblock \emph{Understanding Machine Learning: From Theory to Algorithms}.
\newblock Cambridge University Press, 2014.
\newblock ISBN 978-1-107-05713-5.
\newblock \doi{10.1017/CBO9781107298019}.

\bibitem[Talagrand(2014)]{talagrand2005generic}
Talagrand, M.
\newblock \emph{Upper and Lower Bounds for Stochastic Processes: Modern Methods
  and Classical Problems}.
\newblock Springer, Heidelberg, 2 edition, 2014.
\newblock ISBN 978-3-642-54074-5.
\newblock \doi{10.1007/978-3-642-54075-2}.

\bibitem[Truong(2022)]{truong2022rademacher}
Truong, L.~V.
\newblock On {R}ademacher complexity-based generalization bounds for deep
  learning, 2022.
\newblock URL \url{https://arxiv.org/abs/2208.04284}.

\bibitem[Udrescu \& Tegmark(2020)Udrescu and Tegmark]{udrescu2020aifeynman}
Udrescu, S.-M. and Tegmark, M.
\newblock {AI} {F}eynman: A physics-inspired method for symbolic regression,
  2020.
\newblock URL \url{https://arxiv.org/abs/1905.11481}.

\bibitem[Valiant(1984)]{valiant1984theory}
Valiant, L.~G.
\newblock A theory of the learnable.
\newblock \emph{Commun. ACM}, 27:\penalty0 1134--1142, 1984.
\newblock URL \url{https://api.semanticscholar.org/CorpusID:59712}.

\bibitem[van~der Vaart \& Wellner(1996)van~der Vaart and
  Wellner]{vandervaart1996empirical}
van~der Vaart, A.~W. and Wellner, J.~A.
\newblock \emph{Weak Convergence and Empirical Processes: With Applications to
  Statistics}.
\newblock Springer Series in Statistics. Springer, 1996.
\newblock ISBN 978-0-387-94640-5.
\newblock \doi{10.1007/978-1-4757-2545-2}.

\bibitem[Vapnik(1998)]{vapnik1998statistical}
Vapnik, V.~N.
\newblock \emph{Statistical Learning Theory}.
\newblock Wiley, New York, NY, USA, September 1998.

\end{thebibliography}
